\documentclass[11pt,letterpaper]{article}

\usepackage[T1]{fontenc}
\usepackage[utf8]{inputenc}
\usepackage{lmodern}
\usepackage[margin=1in]{geometry}
\usepackage[round,authoryear]{natbib}
\usepackage{amsmath,amssymb,amsfonts,bm,mathtools,mathrsfs}
\usepackage{amsthm}
\usepackage[dvipsnames]{xcolor}
\usepackage{graphicx}
\usepackage{booktabs}
\usepackage{microtype}
\usepackage{nicefrac}
\usepackage{enumitem}
\usepackage{xurl}
\usepackage[colorlinks=true,allcolors=MidnightBlue]{hyperref}
\usepackage[capitalize,nameinlink]{cleveref}

\allowdisplaybreaks

\newcommand{\prn}[1]{\left(#1\right)}

\newcommand{\wtilde}[1]{\widetilde{#1}}

\def\mumin{\mu_{\min}}

\newcommand{\wtO}{\widetilde O}

\hypersetup{
  pdftitle={Hitting-Time Bounds for Asynchronous Q-Learning},
  pdfauthor={Yang Peng}
}

\newtheorem{theorem}{Theorem}[section]
\newtheorem{lemma}[theorem]{Lemma}
\newtheorem{proposition}[theorem]{Proposition}
\newtheorem{corollary}[theorem]{Corollary}

\theoremstyle{remark}

\newtheorem{example}[theorem]{Example}

\title{Hitting-Time Bounds for Asynchronous Q-Learning}
\author{Yang Peng\\
Yau Mathematical Sciences Center, Tsinghua University\\
\texttt{yang-peng@mail.tsinghua.edu.cn}}
\date{}

\begin{document}
\maketitle

\begin{abstract}
We study the last iterate of standard asynchronous Q-learning from a single
trajectory of a finite Markov decision process. The trajectory follows a
fixed behavior policy whose state--action chain is irreducible and may be
periodic. For discount factor $\gamma$, write $H=(1-\gamma)^{-1}$, and let
$\mumin$ denote the minimum stationary state--action probability and
$\mathfrak h_\star$ the largest expected hitting time of the reverse
state--action chain. We prove that zero-initialized Q-learning with a
constant step size achieves sup-norm error at most $\varepsilon$ with high
probability using
\[
 \wtilde O\prn{\frac{H^4}{\mumin\varepsilon^2}+\mathfrak h_\star}
\]
transitions, for $0<\varepsilon\le1$. The first term retains the quartic
statistical horizon dependence of standard Q-learning. The second controls
initialization error and is additive in $\mathfrak h_\star$. For aperiodic
behavior chains, a comparison with mixing time gives the corresponding
bound with transient $\wtilde O(t_{\mathrm{mix}}/\mumin)$. An empirical
state--action count rule selects a constant step for a subsequent training
run and attains the same rate without prior knowledge of unknown model
parameters, target accuracy, or confidence level. Its output is the last
training iterate, and its sample count includes both stages. The proof
uses two deterministic Bellman envelopes to separate statistical error
from fixed-policy initialization error, and adapts the reverse-time
hitting-cost compensation used for TD learning to state--action
propagation. An exploration lower bound holds for arbitrary predictable
step sizes. Together with the existing statistical lower bound and its
extension to pilot-selected constant steps, it yields matching mean-square
rates up to logarithmic factors on the stated large-sample parameter
classes.
\end{abstract}

\section{Introduction}
\label{sec:intro}
Q-learning estimates the optimal action-value function of a Markov decision
process by updating one coordinate after each observed transition.
In its standard asynchronous form, it uses a single trajectory generated
by a behavior policy and stores only the current value table.
Understanding the sample complexity of this recursion requires accounting
for both statistical error and the time needed to visit the state--action
pairs that must be learned.

Under independent synchronous sampling, the sharp statistical dependence
of standard Q-learning on the effective horizon $H=(1-\gamma)^{-1}$ is
quartic \citep{li2024q}. For a single Markov trajectory, existing bounds
also depend on the minimum stationary state--action probability
$\mumin$ and the mixing time $t_{\mathrm{mix}}$ of the behavior chain.
In particular, \citet[Theorem~4]{li2024q} gives the sufficient transition
count
\[
 \wtO\!\left(\frac{H^4}{\mumin\varepsilon^2}
                +\frac{H t_{\mathrm{mix}}}{\mumin}\right).
\]
The statistical term agrees with their algorithm-specific lower bound
in its stated parameter range. The transient term leaves open whether
the time needed to explore the behavior chain must be multiplied by
the effective horizon.

We prove that zero-initialized asynchronous Q-learning retains the
quartic statistical term with an additive transient governed by hitting
times. Specifically, for $0<\varepsilon\le1$, a suitable constant step
size gives last-iterate sup-norm accuracy $\varepsilon$ with high
probability using
\begin{equation}\label{eq:intro-rate}
 \wtO\!\left(\frac{H^4}{\mumin\varepsilon^2}
                       +\mathfrak h_\star\right)
\end{equation}
transitions. Here $\mathfrak h_\star$ is the largest expected hitting
time between two coordinates of the reverse behavior state--action
chain. Our bounds hold for any initial state--action distribution and
for finite irreducible behavior chains, including non-reversible and
periodic chains. For aperiodic chains, the comparison
$\mathfrak h_\star=\wtO(t_{\mathrm{mix}}/\mumin)$ gives the corresponding
mixing-time bound with no extra factor of $H$ in the transient.
The hitting-time formulation can also be substantially smaller for
some chains, as Example~\ref{ex:periodic} illustrates.

The proof separates stochastic fluctuations from initialization error.
Two deterministic Bellman envelopes control the sampling error using
concentration on the visit clock of each coordinate. The upper envelope
is a perturbed optimal Bellman fixed point. The lower envelope compares
Q-learning with evaluation under a fixed optimal policy and leaves a
homogeneous propagation matrix as the initialization remainder.
We represent its remaining mass by a reverse-time particle and compensate
successive hitting costs with a physical-state potential. This gives the
additive $\mathfrak h_\star$ term in \eqref{eq:intro-rate}. The envelopes
and auxiliary evaluation process are used only in the proof; the
algorithm remains standard Q-learning.

An empirical state--action count rule attains the same rate without
prior knowledge of unknown model parameters. For a prescribed budget,
the first half of the trajectory supplies visit counts, which determine
a constant step size for a zero-initialized run on the second half.
The rule uses the known discount factor, the table size, the budget,
and observed counts, but no target accuracy or confidence level.
Theorem~\ref{thm:calibrated-upper} gives a high-probability guarantee
for the last training iterate, counting both stages.
Corollary~\ref{cor:calibrated_mse} gives the corresponding mean-square
bound.

Theorem~\ref{thm:joint-lower} gives a matching lower bound, up to
logarithmic factors, for the statistical and hitting-time terms on the
parameter classes specified there. The statistical term extends the
constant-step lower bound of \citet{li2024q} to steps selected from a
pilot trajectory and then held fixed. It concerns mean-square error
in the stated large-sample range. The exploration term follows from
an unvisited-coordinate construction and holds for arbitrary
predictable step sizes.

\subsection{Related work}

Classical convergence results for asynchronous Q-learning include
\citet{tsitsiklis1994asynchronous}. Finite-time analyses of asynchronous
stochastic approximation are developed by \citet{qu2020finite}.
\citet{li2022async} obtain additive mixing-time bounds for standard and
variance-reduced Q-learning. Their Section~3.4 also gives a step-size
rule based on empirical occupancy and a guarantee for a stored
schedule-boundary iterate. Our calibration result uses a pilot stage
and a known budget to obtain \eqref{eq:intro-rate} for the last iterate
of a subsequent constant-step run.

\citet{li2024q} establish the quartic statistical rate of standard
Q-learning and its algorithm-specific lower bound. We retain this
statistical dependence and sharpen the trajectory transient to an
additive reverse hitting time. Variance reduction improves the
statistical horizon dependence \citep{li2022async}; the results here
concern the standard update.

Monotone comparison arguments for Q-learning appear in
\citet{wainwright2019cone}. \citet{lee2026sign} separates positive and
negative error components and compares the negative component with an
optimal-policy linear system. We use this type of comparison with fixed,
variance-sensitive Bellman envelopes and a homogeneous fixed-policy
propagation remainder.

\subsection{Relation to the analysis of TD learning}
\label{sec:relation-td}

\citet{peng2026td} studies importance-weighted asynchronous TD learning
for fixed-policy evaluation and obtains the transition bound
\[
 \wtO\!\left(\frac{\varrho H^3}{\mu_{\min}\varepsilon^2}
                         +\mathfrak h_\star\right),
\]
where $\varrho$ bounds the action importance ratios. In its state-value
formulation, $\mu_{\min}$ and $\mathfrak h_\star$ refer to the behavior
\emph{state} chain. That work also covers importance-weighted action-value
prediction. Here these parameters refer to the behavior
\emph{state--action} chain. Standard Q-learning uses no importance ratios:
conditional on the current state--action pair, the observed transition
has the required transition law, and the Bellman backup maximizes over
the actions at the next state.

Both analyses separate stochastic error from initialization and use a
reverse-time particle with a hitting-time potential to obtain an
additive transient. The stochastic arguments differ. \citet{peng2026td}
retains the exact affine stochastic convolution and controls it with an
anchored local Poisson equation and a return-variance certificate.
The maximization in Q-learning makes its update nonlinear. We instead
use deterministic Bellman envelopes and visit-clock concentration to
reduce the error to fixed-policy propagation. A variance certificate
valid for every fixed policy controls the upper envelope, whose greedy
policy need not be optimal for the original MDP. The remaining
propagation selects a fixed target action at the next state, in place
of the importance-weighted sampled next action in the TD action-value
formulation. Section~\ref{sec:initialization} adapts the hitting-cost
compensation to this action projection. Thus the two papers share the
initialization analysis principle, while using different estimates for
stochastic error. The resulting statistical dependence is cubic in
$H$ for TD learning and quartic for standard Q-learning.

The step-size and lower-bound results have different scopes as well.
\citet{peng2026td} gives constant and decreasing step sizes, together
with an empirical-count rule that carries the estimate across epochs
and has a time-uniform guarantee. Our empirical rule uses a prescribed
budget and a new zero-initialized training run with a fixed step.
The TD lower bounds are minimax bounds for policy evaluation on the
specified parameter classes. The quartic statistical lower bound here
is specific to the stated class of Q-learning procedures and to
mean-square error.

\begin{table}[t]
\centering\small
\caption{Single-trajectory transition bounds, up to logarithmic factors,
for standard Q-learning and $0<\varepsilon\le1$. The present upper bounds
use zero initialization. All chain parameters in this table refer to the
behavior state--action chain.}
\label{tab:comparison}
\begin{tabular}{@{}p{0.45\linewidth}p{0.49\linewidth}@{}}
\toprule
Method or guarantee & Sufficient count or statistical lower bound\\
\midrule
Constant step, \citet{li2022async} & $H^5/(\mumin\varepsilon^2)+Ht_{\mathrm{mix}}/\mumin$\\[3pt]
Constant step, \citet{li2024q} & $H^4/(\mumin\varepsilon^2)+Ht_{\mathrm{mix}}/\mumin$\\[3pt]
Constant step, this paper & $H^4/(\mumin\varepsilon^2)+\mathfrak h_\star$\\[3pt]
Empirically calibrated step, this paper & $H^4/(\mumin\varepsilon^2)+\mathfrak h_\star$\\[3pt]
Statistical lower bound, \citet{li2024q}, and its pilot extension here &
$\mathbb E\|Q_T-Q^*\|_\infty^2\gtrsim H^4/(\mumin T\log^3T)$,
in the stated large-sample range\\
\bottomrule
\end{tabular}
\end{table}

\paragraph{Organization.}
Section~\ref{sec:setup} introduces the model and Q-learning update.
Section~\ref{sec:results} states the upper bounds and calibration rule.
Sections~\ref{sec:envelopes} and~\ref{sec:initialization} develop the
Bellman envelope and initialization arguments.
Section~\ref{sec:lower-bounds} gives the lower bounds, and
Section~\ref{sec:discussion} discusses further directions.
The appendices collect occupancy concentration and the proofs that
combine these ingredients into the stated upper bounds.

\section{Problem Formulation}
\label{sec:setup}
Let $\mathcal S$ be a finite state space, let $\mathcal A(s)$ be its nonempty finite action sets, and write $\mathcal X=\{(s,a):s\in\mathcal S,a\in\mathcal A(s)\}$ and $d=|\mathcal X|$. Given $X_t=(S_t,A_t)=x$, the pair $(R_t,S_{t+1})$ has a fixed probability kernel $\mathcal K_x$ on $[0,1]\times\mathcal S$, conditionally independent of the preceding history. Rewards and successor states may be dependent within a transition. Write $P(v\mid x)$ for the state marginal and $r(x)$ for the reward mean. The discount $\gamma\in(0,1)$ is known and $H=(1-\gamma)^{-1}$.

A fixed behavior policy $b$ draws $A_t$ from $b(\cdot\mid S_t)$. Its state--action transition matrix is
\[
 K_b((s,a),(v,c))=P(v\mid s,a)b(c\mid v).
\]
We assume $K_b$ is irreducible on $\mathcal X$, and denote its unique stationary distribution by $\mu$. In particular $b(a\mid s)>0$ for each admissible action. Let
\begin{equation}\label{eq:parameters}
 \mu_{\min}=\min_{x\in\mathcal X}\mu(x),\qquad
 K_b^*(x,y)=\frac{\mu(y)K_b(y,x)}{\mu(x)},\qquad
 \mathfrak h_\star=\max_{x,y}\mathbb E_x^{K_b^*}\inf\{t\ge0:X_t=y\}.
\end{equation}
The initial physical state has any distribution. Upper bounds also allow an arbitrary initial state--action coordinate. No stationary initialization is assumed.

For a stationary target policy $\pi$, write $K_\pi((s,a),(v,c))=P(v\mid s,a)\pi(c\mid v)$ and $\mathcal T_\pi q=r+\gamma K_\pi q$. The optimal Bellman operator is
\[
 (\mathcal Tq)(s,a)=r(s,a)+\gamma\sum_vP(v\mid s,a)\max_{c\in\mathcal A(v)}q(v,c).
\]
Its unique fixed point is $Q^*$, with $0\le Q^*\le H\mathbf1$. Fix a deterministic optimal policy $\pi^*$; it exists because the MDP is finite. Sup norms of vectors and induced sup norms of matrices are both written $\|\cdot\|_\infty$.

The algorithm considered throughout is
\begin{equation}\label{eq:q-update}
 Q_{t+1}(x)=Q_t(x)+\eta_t\mathbf1_{\{X_t=x\}}
 \left[R_t+\gamma\max_{a\in\mathcal A(S_{t+1})}Q_t(S_{t+1},a)-Q_t(x)\right],
 \qquad Q_0=0.
\end{equation}
For $0\le\eta_t\le1$, the table stays in $[0,H]^d$ pathwise. The output is the last iterate; there is no averaging or variance reduction. The calibrated version starts \eqref{eq:q-update} from zero after a preliminary observation period, on the continuation of the same trajectory.

All logarithms are natural unless a base is displayed. Constants denoted by $c,C$ are positive numerical constants. The notation $\wtO$ suppresses logarithmic factors in the displayed statistical parameters, dimension, and confidence. Statements involving the statistical lower bound retain the explicit restrictions in Section~\ref{sec:lower-bounds}.

\section{Main Results}
\label{sec:results}
\subsection{A constant step size}

\begin{theorem}[Constant-step last iterate]\label{thm:constant-main}
Under the model in Section~\ref{sec:setup}, fix a deterministic $T\ge1$, $0<\varepsilon\le1$, and $0<\delta<1$. Set
\[
 L_T=\log\frac{64dH(T+1)}{\delta\varepsilon\mu_{\min}},\qquad
 \eta=\frac{\varepsilon^2}{4096H^3L_T}.
\]
There is a numerical constant $C$ such that, if
\begin{equation}\label{eq:constant-samples}
 T\ge C\left(\frac{H^4L_T}{\mu_{\min}\varepsilon^2}
                           +\mathfrak h_\star\right)L_T,
\end{equation}
then ordinary Q-learning \eqref{eq:q-update} with $\eta_t=\eta$ satisfies
\[
 \mathbb P\!\left(\|Q_T-Q^*\|_\infty\le\varepsilon\right)\ge1-\delta.
\]
\end{theorem}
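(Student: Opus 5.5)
The plan follows the strategy outlined in the introduction. We write the error $\Delta_t=Q_t-Q^*$ and sandwich it between two deterministic Bellman envelopes, after separating a martingale part on each coordinate's visit clock.

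\emph{Step 1: noise on the visit clock.} For each $x$, let $N_t(x)$ be the number of visits to $x$ before time $t$. With constant step $\eta$, the update at coordinate $x$ is a recursion indexed by $N_t(x)$:
\[
Q\leftarrow(1-\eta)Q+\eta\bigl(r(x)+\gamma P_x V_t+\xi\bigr),
\]
where $\xi$ is a martingale difference, conditional on the visit, with conditional variance at most $\mathrm{Var}_{P_x}(V)+1$. We will prove a Freedman-type bound for the discounted noise sums $\sum_k\eta(1-\eta)^{n-k}\xi_k$. The bound holds uniformly over $x$ and $n\le T$, and uniformly over a finite net of value vectors, so that the variance can be evaluated at the envelope values rather than at the random iterates. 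Each sum is controlled by $\sqrt{\eta L_T\,\sigma^2}+\eta L_T H$.

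\emph{Step 2: the two envelopes.} For the upper side, we define $\bar q$ as the fixed point of the perturbed operator $q\mapsto\mathcal Tq+\beta\sqrt{\mathrm{Var}_P(\max q)}+\beta'$, with $\beta\asymp\sqrt{\eta L_T}$. By monotonicity of the recursion and induction on time, $Q_t\le\bar q+(\text{decaying term})$. This step uses a variance certificate valid for every fixed policy: the total discounted variance $\sum\gamma^k K_\pi^k\mathrm{Var}\le H^3$. With this certificate, $\|\bar q-Q^*\|_\infty\lesssim\sqrt{\eta L_T H^3}\le\varepsilon/2$, and the choice $\eta=\varepsilon^2/(4096H^3L_T)$ makes this hold. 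For the lower side, $\max_cQ_t(v,c)\ge Q_t(v,\pi^*(v))$, so $Q^*-Q_t$ is dominated by a linear fixed-policy recursion driven by $K_{\pi^*}$. That recursion splits into a noise part, handled by the same certificate, and a homogeneous propagation of the initial error $Q^*-Q_0\le H\mathbf1$.

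\emph{Step 3: initialization remainder (main obstacle).} The homogeneous part has the form $\prod(\text{random coordinate updates})H\mathbf1$. Its mass is the probability that a reverse-time particle has not yet been killed. At time $t$, the particle sits at coordinate $x$ and waits, going back in time, for the last visit to $x$. At that visit it survives with probability $1-\eta$ and is killed otherwise; on survival it moves to the previous state--action pair $X_{s-1}$ and then to action $\pi^*$ there, picking up a factor $\gamma$. Each jump therefore incurs a factor $\le\gamma$, with killing at rate $\eta$ per visit. We need the remaining mass to be at most $\varepsilon/2$ after $T$ steps. This requires roughly $H\log(H/\varepsilon)$ effective discounted jumps, each costing one backward hitting time. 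Naively this gives $H\mathfrak h_\star$.

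To remove the factor $H$, we will use the potential $\phi(x)=\mathbb E^{K_b^*}_{\cdot}[\text{hit }x]$. This potential depends only on the physical state after the action projection, because the reverse chain from $(v,c)$ moves to a predecessor pair independent of $c$. We then show that the potential-compensated cost, $\text{waiting time}+\phi(\text{new})-\phi(\text{old})$, has conditional mean $O(1/\mu(x))$ per surviving visit rather than $\mathfrak h_\star$. The reason is that once the particle is on a hit, successive costs telescope. The hard part is precisely the action projection: replacing the arriving action by $\pi^*(v)$ changes the target coordinate, and we must check that the hitting cost to $(v,\pi^*(v))$ from the reverse chain is controlled through the same physical-state potential. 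Summing the costs, the total backward time consumed before the mass falls below $\varepsilon/(2H)$ is $O(\mathfrak h_\star+\eta^{-1}\mu_{\min}^{-1}L_T)$ with probability $1-\delta/2$, by an exponential supermartingale argument. The second term is $H^4L_T/(\mu_{\min}\varepsilon^2)$.

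\emph{Step 4: assembly.} Under \eqref{eq:constant-samples}, Step 1 gives enough visits, namely $N_T(x)\gtrsim\eta^{-1}L_T$, to make the decaying terms small, through occupancy concentration over times of order $\mathfrak h_\star+L_T/\mu_{\min}$. Steps 2 and 3 then give $-\varepsilon\le Q_T-Q^*\le\varepsilon$. A union bound over the events of Steps 1 and 3 yields probability $1-\delta$.
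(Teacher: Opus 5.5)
Your outline has the paper's architecture: fixed Bellman envelopes, visit-clock concentration, comparison with $\pi^*$-evaluation, and a reverse particle with a physical-state potential. However, the lower-side comparison is missing its key device, and Step~3 is set up incorrectly.

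\textbf{Lower-side comparison.} Suppose ``noise part plus homogeneous propagation'' means the affine decomposition
$Q^*-q_T=\Phi_{T,0}Q^*-\sum_{t<T}\eta\,\Phi_{T,t+1}e_{X_t}\xi_t$, with $\xi_t=R_t+\gamma Q^*(Z_t)-Q^*(X_t)$. Then the noise term has matrix weights $\Phi_{T,t+1}$ that depend on the future trajectory. Your Step~1 controls only per-coordinate sums with deterministic weights $\eta(1-\eta)^{n-k}$, and a variance certificate says nothing about this dependence. This is exactly the stochastic convolution the paper avoids.

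What is needed is to run the same linear recursion from $w_0=H\mathbf 1$ on the same samples. Linearity gives $w_t-q_t=H\Phi_{t,0}\mathbf 1$ exactly. Then $w_t\ge q^-$ follows from the same monotone per-coordinate induction you use on the upper side, against the fixed envelope $q^-=r+\gamma K_{\pi^*}q^--b$. The induction starts because $q^-\le H\mathbf 1$ and closes because $m^-=q^-+b\le H$. Hence $Q_t\ge q_t\ge q^--H\Phi_{t,0}\mathbf 1$. With $Q_0=0$ and $\mathcal TQ^+\ge0$, the upper induction gives $Q_t\le Q^+$ outright, so no decaying terms or occupancy bounds enter (Lemma~\ref{lem:two-sided-envelopes}).

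Drop the net as well. Only targets built from the two fixed vectors are concentrated, so the union bound runs over $d$ coordinates and $T$ counts. A genuine net of $[0,H]^{|\mathcal S|}$ would put $|\mathcal S|$ into the exponent, which the stated $\eta$ and $L_T$ cannot absorb.

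The certificate must read $(I-\gamma K_\pi)^{-1}\sigma_*^2\le2H^2\mathbf 1$ for every $\pi$. A variance sum of order $H^3$, as you wrote it, yields only a quintic rate. Because $\sigma_*$ is the variance of $V^*$ rather than of $V^\pi$, this is not the usual total-variance lemma. The paper obtains it from the supersolution $u=2HQ^*-(Q^*)^2$.

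\textbf{The reverse particle.} In Step~3, consider an activation, that is, a visit $X_s=i$ found going backward. It leaves the particle at $i$ with probability $1-\eta$. It moves the particle with probability $\eta\gamma$ to $(S_{s+1},\pi^*(S_{s+1}))$, the forward successor, which is the reverse driver's previous location. It kills the particle only with probability $\eta(1-\gamma)=\eta/H$.

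With $X_{s-1}$, your particle represents a different matrix product than $\Phi_{T,0}$. The direction is also what makes the compensation $\ell_\pi\le(K_\pi-I)a+2\mu_{\min}^{-1}\mathbf 1$ usable. By excursion reversal, the reverse driver's last predecessor before returning to $i$ has law $K_b(i,\cdot)$, so the jump target has law $K_{\pi^*}(i,\cdot)$.

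Because death has probability $\eta/H$ per activation, the expected lifetime is of order $H/(\eta\mu_{\min})+\mathfrak h_\star$ (Lemma~\ref{lem:particle-lifetime}), not $\eta^{-1}\mu_{\min}^{-1}$. With the theorem's $\eta$ this is $H^4L_T/(\mu_{\min}\varepsilon^2)$. The high-probability tail then multiplies both terms by $L_T$, including $\mathfrak h_\star$. Proposition~\ref{prop:unvisited-lower} shows that the $\log(1/\delta)$ factor on $\mathfrak h_\star$ is unavoidable, so your bound $O(\mathfrak h_\star+\cdots)$ at confidence $1-\delta/2$ cannot hold.

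Finally, the backward-read trajectory is a $K_b^*$-chain only from a stationary start. For an arbitrary initial law you must pay the likelihood ratio $\nu(X_0)/\mu(X_0)\le\mu_{\min}^{-1}$, which is why $\mu_{\min}$ appears inside $L_T$. No supermartingale is needed. The uniform lifetime bound, Markov's inequality, and the strong Markov property over blocks of length $B$ give survival probability $2^{-\lfloor T/B\rfloor}$. Markov's inequality on the $d$ row sums then finishes the argument.
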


The step size in Theorem~\ref{thm:constant-main} uses the target accuracy, confidence level, and minimum stationary probability. Section~\ref{sec:calibration} gives an empirical rule that does not use these quantities. The implicit appearance of $T$ inside $L_T$ is logarithmic and gives the transition count \eqref{eq:intro-rate}.

The proof is given in Appendix~\ref{app:main-proof}.

\paragraph{Proof overview.}
The proof has two inputs. The deterministic envelope argument in Section~\ref{sec:envelopes} controls the statistical term using concentration only on the visit clock of each coordinate. The propagation argument in Section~\ref{sec:initialization} bounds initialization by reversing the behavior trajectory. Together they give $\|Q_T-Q^*\|_\infty\le\varepsilon/2+H\|\Phi_{T,0}\|_\infty$, where $\Phi_{T,0}$ is the homogeneous propagator under a fixed optimal policy.

\subsection{Mixing-time consequence and periodic examples}
For an aperiodic $K_b$, define $t_{\mathrm{mix}}$ at total-variation threshold $1/4$. Total-variation contraction gives an integer
\[
 \tau\le C t_{\mathrm{mix}}\log(2/\mu_{\min})
 \quad\text{such that}\quad K_b^\tau(i,x)\ge\mu(x)/2
 \quad\text{for all }i,x.
\]
Indeed, iterate the usual contraction bound until the total-variation distance is at most $\mu_{\min}/2$. Reversal then gives
$(K_b^*)^\tau(x,i)=\mu(i)K_b^\tau(i,x)/\mu(x)\ge\mu(i)/2$.
At successive $\tau$-spaced observation times, the conditional chance of hitting $i$ is at least $\mu(i)/2$. A geometric-tail bound therefore implies
\begin{equation}\label{eq:hitting-mixing}
 \mathfrak h_\star\le\frac{2\tau}{\mu_{\min}}
 \le\frac{C t_{\mathrm{mix}}}{\mu_{\min}}\log\frac2{\mu_{\min}}.
\end{equation}
Consequently Theorem~\ref{thm:constant-main}, and the calibrated theorem below, imply
\[
 \wtO\!\left(\frac{H^4}{\mu_{\min}\varepsilon^2}
                         +\frac{t_{\mathrm{mix}}}{\mu_{\min}}\right).
\]
The comparison uses the mixing time of the forward behavior chain.

\begin{example}[Nearly alternating behavior]\label{ex:periodic}
Take two physical states, two identical actions at each state, a uniform behavior policy, and physical transition matrix
\[
 P_\alpha=\begin{pmatrix}\alpha&1-\alpha\\1-\alpha&\alpha\end{pmatrix},
 \qquad 0\le\alpha\le1/4.
\]
The state--action chain has $\mu_{\min}=1/4$. Its reverse equals itself. For a target coordinate, the expected hitting time from the other action at the target physical state is $4$; from the opposite physical state it is $2+(1-\alpha)^{-1}\le10/3$. Hence $\mathfrak h_\star=4$ throughout this family. For $\alpha>0$, the physical eigenvalue $2\alpha-1$ gives $t_{\mathrm{mix}}=\Theta(1/\alpha)$ as $\alpha\downarrow0$. At $\alpha=0$ the chain is irreducible and periodic. Thus the hitting-time bound can remain bounded as the mixing-time bound diverges, and it still applies at the periodic limit.
\end{example}

\subsection{Calibration without unknown problem parameters}
\label{sec:calibration}

The preceding guarantee uses a step size tuned to a target accuracy.
We next give a fixed-budget rule that needs no transition probabilities,
stationary probabilities, hitting times, target accuracy, or confidence
level. The discount factor and the finite state--action space remain
known inputs. The only extra operation is counting visits during an
initial calibration segment.

Fix an integer $n\ge1$, for a total budget of $2n$ transitions. Observe
the first $n$ transitions under the same behavior policy and set
\begin{equation}\label{eq:calibration_step}
 \begin{split}
  N_n(x)&=\sum_{t=0}^{n-1}\mathbf1_{\{X_t=x\}},\qquad
  M_n=\max\{1,\min_xN_n(x)\},\\
  \ell_n&=\log\bigl(16dH(n+8)^2\bigr),\qquad
  \widehat\eta_n=\min\left\{H^{-1},\frac{AH\ell_n}{M_n}\right\}.
 \end{split}
\end{equation}
Here $A=2^{18}$ is fixed once and for all. At time $n$, initialize
$Q^{(n)}_0=0$ and perform
exactly $n$ ordinary Q-learning updates on the next $n$ transitions,
using the constant step $\widehat\eta_n$. Return $Q^{(n)}_n$.
The output is the last iterate of this Q-learning run. The calibration
data determine its step size; the estimation phase uses each of its
transitions once, in chronological order.

\begin{theorem}[Calibration without unknown problem parameters]
\label{thm:calibrated-upper}
Under the assumptions of the main upper-bound theorem, let
$\widehat Q_{2n}=Q^{(n)}_n$ denote the output of
\eqref{eq:calibration_step}. There are universal constants $C_0,C_1>0$
such that the following holds. For every $\delta\in(0,1)$ and
$\varepsilon\in(0,1]$, put
\[
 L_n=\log\frac{C_0dH(n+8)^2}{\delta\mu_{\min}}.
\]
If
\begin{equation}\label{eq:calibrated_sample}
 n\ge C_1\left\{
       \frac{H^4\ell_nL_n}{\mu_{\min}\varepsilon^2}
       +\mathfrak h_\star L_n\right\},
\end{equation}
then, under every initial state--action distribution,
\[
 \mathbb P\bigl(\|\widehat Q_{2n}-Q^*\|_\infty
                        \le\varepsilon\bigr)\ge1-\delta.
\]
Thus its total transition complexity is
$\widetilde O(H^4/(\mu_{\min}\varepsilon^2)+\mathfrak h_\star)$.
The quantities $\varepsilon$, $\delta$, $\mu_{\min}$, and
$\mathfrak h_\star$ occur in the guarantee but not in the step rule.
\end{theorem}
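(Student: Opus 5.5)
The plan is to reduce Theorem~\ref{thm:calibrated-upper} to a version of Theorem~\ref{thm:constant-main} that holds uniformly over a range of step sizes, applied conditionally on the calibration data. The training run starts at time $n$ from the state--action pair $X_n$. By the Markov property, conditionally on $\mathcal F_n=\sigma(X_0,R_0,\dots,X_n)$ it is ordinary zero-initialized Q-learning with constant step $\widehat\eta_n$, which is $\mathcal F_n$-measurable, and with an arbitrary initial coordinate. Theorem~\ref{thm:constant-main} allows an arbitrary initial coordinate, so it suffices to show two things. First, with probability at least $1-\delta/2$, the random step $\widehat\eta_n$ lies in a deterministic window $[\eta_-,\eta_+]$. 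Second, for every fixed $\eta$ in that window, $n$ constant-step updates reach accuracy $\varepsilon$ with conditional probability at least $1-\delta/2$.

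\textbf{Step 1 (occupancy concentration).} I would use the appendix occupancy concentration for the visit counts, in its hitting-time form. Split $[0,n)$ into blocks of length of order $\mathfrak h_\star$. In each block, every coordinate is hit with constant probability; reversal is not needed here, because forward hitting times from stationarity are controlled by reverse ones through $\mu$. A Bernstein or martingale bound on the visit clock then gives
\[
 \tfrac12\mu(x)n\le N_n(x)\le 2\mu(x)n\quad\text{for all }x,
\]
with probability $1-\delta/2$, once $n\ge C(\mathfrak h_\star+\mu_{\min}^{-1})L_n$. On this event $M_n\asymp\mu_{\min}n$, so
\[
 \widehat\eta_n\asymp\min\Bigl\{H^{-1},\ \frac{AH\ell_n}{\mu_{\min}n}\Bigr\}.
\]

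\textbf{Step 2 (error decomposition for general fixed $\eta$).} I would use the proof structure behind Theorem~\ref{thm:constant-main}, not the theorem with its specific $\eta$. That proof gives
\[
 \|Q_n-Q^*\|_\infty\le E_{\mathrm{stat}}(\eta)+H\|\Phi_{n,0}\|_\infty.
\]
Here the envelope term scales like $\sqrt{\eta H^3 L_n}$, as the choice $\eta=\varepsilon^2/(4096H^3L_T)$ indicates. The propagation term is small once $n\eta\mu_{\min}\gtrsim H\log(dH/\varepsilon\delta)$ and $n\gtrsim\mathfrak h_\star L_n$, from Section~\ref{sec:initialization}.

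Now plug in $\eta\asymp AH\ell_n/(\mu_{\min}n)$. The statistical term becomes of order $\sqrt{H^4\ell_nL_n/(\mu_{\min}n)}$, which is at most $\varepsilon/2$ under \eqref{eq:calibrated_sample} with $C_1$ large. The product $n\eta\mu_{\min}\asymp AH\ell_n$ is large enough to kill the initialization term, with $A=2^{18}$ absorbing the constants; the remainder is $\varepsilon$-dependent only through $\log(1/\varepsilon)$, which is dominated by $\ell_n$ given the lower bound on $n$. In the capped case $\widehat\eta_n=H^{-1}$ the step is larger than the formula suggests, so the statistical term needs checking. However, the cap binds only when $n\lesssim AH^2\ell_n/\mu_{\min}$, which contradicts \eqref{eq:calibrated_sample} for $\varepsilon\le1$. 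So on the good event the cap is inactive.

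\textbf{Main obstacle.} The hard part is that $\varepsilon$ and $\delta$ do not enter the step. The statistical bound must therefore hold with logarithm $\ell_n$ or $L_n$, and the initialization decay must beat the $\log(1/\varepsilon)$ factor implicitly. I would handle this by using the decay $\|\Phi_{n,0}\|\le\exp(-c\eta\mu_{\min}n/H)$ plus a hitting-time failure probability. With $\eta\mu_{\min}n\asymp AH\ell_n$, this gives $H\|\Phi\|\le (n+8)^{-2}$. That is at most $\varepsilon/2$ because \eqref{eq:calibrated_sample} forces $n\ge\varepsilon^{-1}$. Finally I combine the two events by a union bound and take conditional expectations over $\mathcal F_n$.
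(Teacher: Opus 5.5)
Your reduction matches the paper's: the occupancy event from Lemma~\ref{lem:calibration_occupancy}, a two-sided bracket on $\widehat\eta_n$ with the cap inactive, conditioning on the pilot, and Lemma~\ref{lem:two-sided-envelopes} for the frozen step. That lemma's condition $\widehat\eta_n\lesssim\varepsilon^2/(H^3L_n)$ follows from \eqref{eq:calibrated_sample}, because $L_n$ already contains $\log(1/\delta)$.

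The gap is in the forgetting step, and it comes from $\delta$, not $\varepsilon$. Your requirement $n\eta\mu_{\min}\gtrsim H\log(dH/(\varepsilon\delta))$ becomes, with $n\widehat\eta_n\mu_{\min}\asymp AH\ell_n$, the condition $A\ell_n\gtrsim\log(1/\delta)$, and you only check the $\log(1/\varepsilon)$ part. But $\ell_n=\log(16dH(n+8)^2)$ contains no $\delta$. Meanwhile \eqref{eq:calibrated_sample} allows $\log(1/\delta)$ of order $n\mu_{\min}\varepsilon^2/(H^4\ell_n)$ (take $\delta=e^{-\sqrt n}$ with $n$ large), far beyond $A\ell_n\approx2A\log n$. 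Concretely, with this step Proposition~\ref{prop:fixed-policy-forgetting} has $n/B\le A\ell_n/4$. Its failure bound $\frac{d}{\mu_{\min}\rho}2^{-\lfloor n/B\rfloor}$ is therefore only polynomially small in $dHn$ and does not decrease as $\delta\to0$. The decay ``plus a hitting-time failure probability'' in your last paragraph is not available either. The natural epoch argument (each coordinate visited $\gtrsim1/\eta$ times in each of $\asymp AH\ell_n$ epochs) would need occupancy concentration in every epoch, hence $n\gtrsim H\ell_n\mathfrak h_\star L_n$. That restores a multiplicative $H$ in the transient.

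The missing idea is the higher-power comparison of Corollary~\ref{cor:power-forgetting}. Pathwise, $\|\Phi\|_\infty^p\le\|\Phi^{(p)}\|_\infty$, where $\Phi^{(p)}$ is a fixed-policy propagator on the same trajectory with discount $\gamma_p$ and step $\eta_p$ satisfying $H_p/\eta_p\le2H/(p\eta)$; this holds when $pH\eta\le1/2$. The paper takes the proof-only parameter $p=\lceil L_n/\ell_n\rceil$ and threshold $\rho=(n+8)^{-1}$. Then:
\begin{itemize}
\item $b_p\le32n/(AL_n)+8\mathfrak h_\star+1$, so $4L_nb_p\le n$, and the exponent $\lfloor n/b_p\rfloor$ covers $\log_2(3d/(\mu_{\min}\delta\rho^p))$, which now contains $\log(1/\delta)$;
\item the threshold cost $\rho^{-p}=(n+8)^p$ stays below $e^{L_n}$, because $\ell_n\ge2\log(n+8)$;
\item $pH\widehat\eta_n\le4AH^2L_n/(\mu_{\min}n)\le1/2$ follows from \eqref{eq:calibrated_sample}.
\end{itemize}
Replacing your first-moment forgetting step by this argument completes the proof.

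Separately, constant-probability hits in blocks of length $\mathfrak h_\star$ only give counts of order $n/\mathfrak h_\star$, not the ratio bound $N_n(x)/n\in[\mu(x)/2,2\mu(x)]$. Cite the renewal argument of Lemma~\ref{lem:calibration_occupancy} directly instead.
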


The budget is fixed before observing the trajectory. Each budget specifies
a pilot segment and a subsequent zero-initialized training run.

The proof is given in Appendix~\ref{app:unknown-mu}; it uses the occupancy estimate in Appendix~\ref{app:coverage} and the higher-power propagation bound in Section~\ref{sec:initialization}.

\begin{corollary}[Mean-square accuracy with the same calibrated step]
\label{cor:calibrated_mse}
For $0<\varepsilon\le1$, the same algorithm satisfies
\[
 \mathbb E\|\widehat Q_{2n}-Q^*\|_\infty^2\le\varepsilon^2
\]
whenever
\[
 n\ge C\left\{
 \frac{H^4\ell_n}{\mu_{\min}\varepsilon^2}
       \log\frac{C dH^3(n+8)^2}{\mu_{\min}\varepsilon^2}
 +\mathfrak h_\star
       \log\frac{C dH^3(n+8)^2}{\mu_{\min}\varepsilon^2}
 \right\}.
\]
\end{corollary}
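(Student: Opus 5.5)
We derive Corollary~\ref{cor:calibrated_mse} from Theorem~\ref{thm:calibrated-upper} by splitting the expectation on a good event and using the pathwise bound on the table. Since $\widehat\eta_n\le H^{-1}\le1$, the iterates of \eqref{eq:q-update} stay in $[0,H]^d$ pathwise, and $0\le Q^*\le H\mathbf1$. Hence $\|\widehat Q_{2n}-Q^*\|_\infty\le H$ always. For any $\delta\in(0,1)$, let $E_\delta$ be the event $\{\|\widehat Q_{2n}-Q^*\|_\infty\le\varepsilon/\sqrt2\}$. Then
\[
 \mathbb E\|\widehat Q_{2n}-Q^*\|_\infty^2
 \le\frac{\varepsilon^2}{2}+H^2\,\mathbb P(E_\delta^c).
\]
It therefore suffices to make $\mathbb P(E_\delta^c)\le\delta$ with $\delta=\varepsilon^2/(2H^2)$.

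We apply Theorem~\ref{thm:calibrated-upper} with accuracy $\varepsilon'=\varepsilon/\sqrt2\in(0,1]$ and confidence $\delta=\varepsilon^2/(2H^2)\in(0,1)$. The step rule \eqref{eq:calibration_step} does not depend on $\varepsilon$ or $\delta$. The algorithm is therefore literally the same, and only the guarantee is being instantiated; this is the point of the phrase ``the same calibrated step''. With this choice,
\[
 L_n=\log\frac{2C_0dH^3(n+8)^2}{\mu_{\min}\varepsilon^2}
 \le\log\frac{C dH^3(n+8)^2}{\mu_{\min}\varepsilon^2}
\]
once $C\ge2C_0$. Condition \eqref{eq:calibrated_sample} at $\varepsilon'$ reads
\[
 n\ge C_1\Bigl\{\frac{2H^4\ell_nL_n}{\mu_{\min}\varepsilon^2}+\mathfrak h_\star L_n\Bigr\}.
\]
This is implied by the displayed hypothesis of the corollary with $C\ge\max\{2C_0,2C_1\}$.

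The theorem then gives $\mathbb P(E_\delta^c)\le\delta=\varepsilon^2/(2H^2)$. Substituting into the split above yields
\[
 \mathbb E\|\widehat Q_{2n}-Q^*\|_\infty^2\le\frac{\varepsilon^2}{2}+\frac{\varepsilon^2}{2}=\varepsilon^2.
\]

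The argument is essentially bookkeeping, and I expect no real obstacle. The only points to check are the following:
\begin{itemize}[leftmargin=*]
 \item $\delta<1$ holds, because $\varepsilon\le1\le H$.
 \item The theorem holds uniformly in $\delta$ and $\varepsilon$ for the fixed data-driven step, so the parameter choice is made after the algorithm is fixed. This is legitimate because the theorem's statement quantifies over all $\delta,\varepsilon$ for the same output.
 \item The logarithm inside $L_n$ is monotone, so replacing $C_0$ by the larger $C$ only strengthens the hypothesis.
\end{itemize}
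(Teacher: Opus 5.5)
Your proposal is correct and follows the paper's proof. Both instantiate Theorem~\ref{thm:calibrated-upper} with an accuracy of order $\varepsilon$ and a confidence of order $\varepsilon^2/H^2$, then bound the squared error by $H^2$ on the failure event; the paper uses $\varepsilon/2$ and $\varepsilon^2/(4H^2)$ where you use $\varepsilon/\sqrt2$ and $\varepsilon^2/(2H^2)$. Your check that $L_n$ and the sample condition are implied by the corollary's hypothesis once $C\ge\max\{2C_0,2C_1\}$ fills in bookkeeping that the paper leaves implicit.
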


The proof is also given in Appendix~\ref{app:unknown-mu}.

\section{Deterministic Bellman Envelopes}
\label{sec:envelopes}
This section separates the statistical error of Q-learning from
homogeneous propagation under a fixed optimal policy. Unlike the affine
stochastic-convolution analysis of \citet{peng2026td}, the argument uses
two deterministic Bellman envelopes and concentration on the visit clock
of each coordinate. Recall the notation of Section~\ref{sec:setup},
and write $V^*(s)=\max_a Q^*(s,a)$.
Define the nonnegative function $\sigma_*$ by
\begin{equation}
 \sigma_*^2(x)
 =\operatorname{Var}(R_t+\gamma V^*(S_{t+1})\mid X_t=x).
 \label{eq:envelope-variance}
\end{equation}
All vector inequalities below are componentwise.

\begin{lemma}[A variance certificate for every fixed policy]
\label{lem:universal-variance}
The function $u=2H Q^*-(Q^*)^2$, where the square is componentwise,
satisfies
\begin{equation}
 0\le u\le H^2\mathbf 1,
 \qquad \sigma_*^2+\gamma^2 K_\pi u\le u
 \quad\text{for every stationary policy }\pi.
 \label{eq:universal-certificate}
\end{equation}
Consequently,
\begin{equation}
 (I-\gamma K_\pi)^{-1}\sigma_*^2\le 2H^2\mathbf 1,
 \qquad
 \bigl\|(I-\gamma K_\pi)^{-1}\sigma_*\bigr\|_\infty
 \le \sqrt{2}\,H^{3/2}.
 \label{eq:universal-resolvent}
\end{equation}
\end{lemma}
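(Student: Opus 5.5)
The plan is to verify the certificate pathwise, one transition at a time, using only $0\le Q^*\le H\mathbf1$, $R_t\in[0,1]$ and the Bellman equation. The resolvent bounds then follow from a Neumann-series comparison together with Cauchy--Schwarz. Write $f(q)=2Hq-q^2=H^2-(H-q)^2$, so that $u=f(Q^*)$ componentwise. Since $0\le Q^*\le H$, we get $0\le u\le H^2\mathbf1$ at once. Moreover $f$ is nondecreasing on $[0,H]$.

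\emph{Certificate.} Fix $x$ and write $Y=R_t+\gamma V^*(S_{t+1})$, with $V'=V^*(S_{t+1})$. By the Bellman equation $\mathbb E[Y\mid X_t=x]=Q^*(x)$, hence $\sigma_*^2(x)=\mathbb E[Y^2\mid X_t=x]-Q^*(x)^2$.

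For the policy term, monotonicity of $f$ gives $\sum_c\pi(c\mid v)f(Q^*(v,c))\le f(V^*(v))$ at every physical state $v$. Hence $(K_\pi u)(x)\le\mathbb E[f(V')\mid X_t=x]$. It therefore suffices to prove the pathwise inequality
\[
 Y^2+\gamma^2\bigl(2HV'-V'^2\bigr)\le 2H\bigl(R_t+\gamma V'\bigr).
\]
Taking conditional expectations turns the right side into $2HQ^*(x)$, and subtracting $Q^*(x)^2$ from both sides yields $\sigma_*^2+\gamma^2K_\pi u\le u$.

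Expanding $Y^2=R_t^2+2\gamma R_tV'+\gamma^2V'^2$ cancels the $\gamma^2V'^2$ terms. Using $2H\gamma V'-2H\gamma^2V'=2\gamma V'$ (because $H(1-\gamma)=1$), the inequality reduces to
\[
 R_t^2+2\gamma R_tV'\le 2HR_t+2\gamma V'.
\]
This holds because $R_t^2\le R_t\le 2HR_t$, and because $R_t\le1$ gives $R_tV'\le V'$.

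\emph{Resolvent bounds.} Iterating $u\ge\sigma_*^2+\gamma^2K_\pi u$ and using that $K_\pi$ is nonnegative and $\gamma^{2k}K_\pi^ku\to0$, we obtain
\[
 (I-\gamma^2K_\pi)^{-1}\sigma_*^2\le u\le H^2\mathbf1 .
\]
To pass from $\gamma^2$ to $\gamma$, write $I-\gamma^2K_\pi=(I-\gamma K_\pi)+\gamma(1-\gamma)K_\pi$. This gives
\[
 (I-\gamma K_\pi)^{-1}=M(I-\gamma^2K_\pi)^{-1},\qquad M=I+\gamma(1-\gamma)(I-\gamma K_\pi)^{-1}K_\pi .
\]
The matrix $M$ is nonnegative with row sums $1+\gamma(1-\gamma)H=1+\gamma\le2$. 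Hence $(I-\gamma K_\pi)^{-1}\sigma_*^2\le Mu\le 2H^2\mathbf1$.

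For the square-root bound, note that $(I-\gamma K_\pi)^{-1}=HW$, where $W$ is a row-stochastic matrix. Jensen (Cauchy--Schwarz) gives $W\sigma_*\le\sqrt{W\sigma_*^2}$ componentwise. Therefore
\[
 (I-\gamma K_\pi)^{-1}\sigma_*\le H\sqrt{2H^2/H}=\sqrt2H^{3/2}.
\]

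The main obstacle is the certificate step. The choice $u=2HQ^*-(Q^*)^2$ must work uniformly over all $\pi$, and this works only because $f$ is monotone on $[0,H]$, which lets the $\max$ inside $V^*$ dominate any policy average. The remaining algebra is then elementary, using $H(1-\gamma)=1$ and $R_t\in[0,1]$.
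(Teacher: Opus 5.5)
Your proposal is correct and follows essentially the same route as the paper: you use monotonicity of $f(z)=2Hz-z^2$ on $[0,H]$ to dominate $K_\pi u$ by $\mathbb E[f(V^*(S_{t+1}))\mid x]$, the same pathwise algebra based on $H(1-\gamma)=1$ and $R_t\in[0,1]$, the identity $(I-\gamma K_\pi)^{-1}(I-\gamma^2K_\pi)=I+\gamma(1-\gamma)(I-\gamma K_\pi)^{-1}K_\pi$ with row sums $1+\gamma$, and row-wise Cauchy--Schwarz. The differences are cosmetic: you pass through $(I-\gamma^2K_\pi)^{-1}\sigma_*^2\le u$ first, and you split the pathwise slack as $R_t(2H-R_t)+2\gamma V'(1-R_t)\ge0$ rather than via $2H-R_t-2\gamma V'\ge1$.
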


\begin{proof}
The function $f(z)=2Hz-z^2$ is increasing on $[0,H]$.
Fix $x=(s,a)$, write $q=Q^*(x)$, and set $v'=V^*(S_{t+1})$.
Since $Q^*(s',a')\le V^*(s')$,
\[
 (K_\pi u)(x)\le \mathbb E[2Hv'-(v')^2\mid x].
\]
Using $q=\mathbb E[R_t+\gamma v'\mid x]$ gives
\begin{align*}
 u(x)-\sigma_*^2(x)-\gamma^2(K_\pi u)(x)
 &\ge
 \mathbb E\!\left[
 R_t(2H-R_t-2\gamma v')+2\gamma v'\mid x
 \right]\ge0.
\end{align*}
Indeed, $2H-R_t-2\gamma v'\ge2H-1-2\gamma H=1$.
This argument permits arbitrary dependence between $R_t$ and
$S_{t+1}$ conditional on $x$.
Let $\mathcal R_\pi=(I-\gamma K_\pi)^{-1}$, which is nonnegative.
Then
\[
 \mathcal R_\pi\sigma_*^2
 \le\mathcal R_\pi(I-\gamma^2K_\pi)u
 =u+\gamma(1-\gamma)\mathcal R_\pi K_\pi u
 \le(1+\gamma)H^2\mathbf 1
 \le2H^2\mathbf 1.
\]
Every row sum of $\mathcal R_\pi$ equals $H$.
Applying Cauchy--Schwarz to each row yields
$(\mathcal R_\pi\sigma_*)^2
\le H\mathcal R_\pi\sigma_*^2\le2H^3\mathbf 1$.
\end{proof}

\begin{lemma}[Construction of two fixed envelopes]
\label{lem:fixed-envelopes}
Fix $0<\varepsilon\le1$, $L\ge1$, and
\begin{equation}
 0<\eta\le \frac{\varepsilon^2}{4096H^3L}.
 \label{eq:envelope-step-condition}
\end{equation}
Define
\begin{equation}
 b(x)=4\sqrt{\eta L}\bigl(\sigma_*(x)+\varepsilon/2\bigr)
      +8\eta HL.
 \label{eq:envelope-bonus}
\end{equation}
There are unique vectors $Q^+$ and $q^-$ satisfying
\begin{equation}
 Q^+=\mathcal TQ^++b,
 \qquad q^-=r+\gamma K_{\pi^*}q^--b.
 \label{eq:two-fixed-points}
\end{equation}
They obey
\begin{equation}
 Q^*-\frac{\varepsilon}{2}\mathbf 1
 \le q^-\le Q^*\le Q^+
 \le Q^*+\frac{\varepsilon}{2}\mathbf 1.
 \label{eq:envelope-closeness}
\end{equation}
Moreover, their fixed-target means satisfy
\begin{equation}
 m^+:=\mathcal TQ^+=Q^+-b\ge0,
 \qquad
 m^-:=r+\gamma K_{\pi^*}q^-=q^-+b\le Q^*\le H\mathbf 1.
 \label{eq:envelope-means}
\end{equation}
\end{lemma}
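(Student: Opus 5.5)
Existence and uniqueness come first. The map $q\mapsto\mathcal Tq+b$ is a $\gamma$-contraction in $\|\cdot\|_\infty$, and so is the affine map $q\mapsto r+\gamma K_{\pi^*}q-b$. Banach's fixed-point theorem therefore gives unique $Q^+$ and $q^-$. The means identities $m^+=\mathcal TQ^+=Q^+-b$ and $m^-=q^-+b$ are just rearrangements of \eqref{eq:two-fixed-points}.

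The ordering $q^-\le Q^*\le Q^+$ follows from monotonicity, using $b\ge0$. Since $Q^*=\mathcal T_{\pi^*}Q^*$, we have $Q^*-q^-=(I-\gamma K_{\pi^*})^{-1}b\ge0$. For the upper envelope, iterate $\mathcal T(\cdot)+b$ from $Q^*$: the first step gives $Q^*+b\ge Q^*$, and monotonicity of $\mathcal T$ keeps the iterates $\ge Q^*$, so the limit satisfies $Q^+\ge Q^*$.

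For the closeness bounds, write
\[
Q^*-q^-=\mathcal R_{\pi^*}b\qquad\text{with }\mathcal R_\pi=(I-\gamma K_\pi)^{-1}.
\]
For $Q^+$, let $\pi^+$ be greedy for $Q^+$. Then $\mathcal TQ^+=\mathcal T_{\pi^+}Q^+$ and $\mathcal TQ^*\ge\mathcal T_{\pi^+}Q^*$, so
\[
Q^+-Q^*\le\gamma K_{\pi^+}(Q^+-Q^*)+b,\qquad\text{hence}\qquad Q^+-Q^*\le\mathcal R_{\pi^+}b.
\]
The policy $\pi^+$ need not be optimal. This is exactly why Lemma~\ref{lem:universal-variance} is stated for every stationary policy.

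In both cases the task reduces to bounding $\|\mathcal R_\pi b\|_\infty$ uniformly in $\pi$. Split $b$ into its three terms:
\[
\mathcal R_\pi b=4\sqrt{\eta L}\,\mathcal R_\pi\sigma_*+2\sqrt{\eta L}\,\varepsilon\,\mathcal R_\pi\mathbf1+8\eta HL\,\mathcal R_\pi\mathbf1.
\]
Use \eqref{eq:universal-resolvent} on the first term and $\mathcal R_\pi\mathbf1=H\mathbf1$ on the other two. This gives
\[
\|\mathcal R_\pi b\|_\infty\le4\sqrt2\,\sqrt{\eta L}\,H^{3/2}+2\varepsilon H\sqrt{\eta L}+8\eta H^2L.
\]
With $\sqrt{\eta L}\le\varepsilon/(64H^{3/2})$ from \eqref{eq:envelope-step-condition}, the three terms are bounded as follows:
\begin{itemize}
\item the first is at most $4\sqrt2\,\varepsilon/64<\varepsilon/11$;
\item the second is at most $2\varepsilon^2/(64H^{1/2})\le\varepsilon/32$, using $\varepsilon\le1$ and $H\ge1$;
\item the third is at most $8\varepsilon^2/(4096H)\le\varepsilon/512$.
\end{itemize}
The total is well below $\varepsilon/2$, which proves \eqref{eq:envelope-closeness}.

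It remains to check the sign and range claims in \eqref{eq:envelope-means}. First, $m^+=\mathcal TQ^+\ge\mathcal TQ^*=Q^*\ge0$ by monotonicity, since $Q^+\ge Q^*$. Second, $m^-=q^-+b$, and we need $q^-+b\le Q^*$, i.e.\ $b\le\mathcal R_{\pi^*}b$. This holds because
\[
\mathcal R_{\pi^*}b=b+\gamma K_{\pi^*}\mathcal R_{\pi^*}b\ge b,
\]
as every term is nonnegative. Finally, $Q^*\le H\mathbf1$ is given in Section~\ref{sec:setup}.

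The only nontrivial ingredient is the uniform-in-$\pi$ resolvent bound on $\sigma_*$, which is already provided by Lemma~\ref{lem:universal-variance}. The step needing care is the upper envelope: passing through the greedy policy of $Q^+$, not $\pi^*$, is what makes that uniform bound necessary. The rest is constant bookkeeping.
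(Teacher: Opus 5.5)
Your proposal is correct and follows the paper's proof essentially step for step. It uses contraction for existence and uniqueness, the resolvent identity $Q^*-q^-=(I-\gamma K_{\pi^*})^{-1}b$ for the lower envelope, and the greedy policy $\pi^+$ of $Q^+$ together with the uniform-in-$\pi$ bound of Lemma~\ref{lem:universal-variance} for the upper envelope, followed by the same three-term constant count. Your checks of $m^+\ge0$ (via $\mathcal TQ^+\ge\mathcal TQ^*$) and $m^-\le Q^*$ (via $b\le\mathcal R_{\pi^*}b$) differ only cosmetically from the paper's, which uses $r\ge0$, $Q^+\ge0$, and $q^-\le Q^*$ directly.
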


\begin{proof}
Both maps in \eqref{eq:two-fixed-points} are $\gamma$-contractions
in the maximum norm.  Monotonicity gives $Q^+\ge Q^*$, while
\[
 Q^*-q^-=(I-\gamma K_{\pi^*})^{-1}b\ge0.
\]
Let $\pi^+$ be a deterministic policy greedy with respect to $Q^+$.
The Bellman optimality inequality for $Q^*$ implies
\[
 Q^+-Q^*\le\gamma K_{\pi^+}(Q^+-Q^*)+b,
 \qquad
 Q^+-Q^*\le(I-\gamma K_{\pi^+})^{-1}b.
\]
Lemma~\ref{lem:universal-variance} therefore bounds either envelope
error by
\begin{equation}
 4\sqrt{2\eta L}\,H^{3/2}
 +2H\varepsilon\sqrt{\eta L}+8\eta H^2L.
 \label{eq:envelope-gap-estimate}
\end{equation}
Under \eqref{eq:envelope-step-condition}, this is at most
\[
 \varepsilon\left(\frac{4\sqrt2+2}{64}+\frac{8}{4096}\right)
 <\frac{\varepsilon}{2},
\]
where we used $\varepsilon\le1$ and $H\ge1$.
Finally, $m^+\ge0$ follows from $Q^+\ge Q^*\ge0$ and $r\ge0$;
and $m^-\le r+\gamma K_{\pi^*}Q^*=Q^*$ follows from $q^-\le Q^*$.
The lower envelope itself need not be nonnegative.
\end{proof}

We next state the comparison in a form that can be combined directly
with a bound on homogeneous propagation.  Along the observed trajectory,
write $X_t=(S_t,A_t)$ and $Z_t=(S_{t+1},\pi^*(S_{t+1}))$.
For the coordinate vector $e_x$, put
\begin{equation}
 B_t=I+\eta e_{X_t}(\gamma e_{Z_t}^{\mathsf T}-e_{X_t}^{\mathsf T}),
 \qquad
 \Phi_{t,0}=B_{t-1}\cdots B_0,
 \qquad \Phi_{0,0}=I.
 \label{eq:envelope-propagator}
\end{equation}
Each $B_t$ is a nonnegative substochastic matrix.  For such matrices,
$\|\Phi_{t,0}\|_\infty=\|\Phi_{t,0}\mathbf 1\|_\infty$.

\begin{lemma}[Two-sided comparison for zero-initialized Q-learning]
\label{lem:two-sided-envelopes}
Let $T\ge1$, $0<\delta<1$, $0<\varepsilon\le1$, and
\begin{equation}
 L=\log\frac{4d(T+1)}{\delta}.
 \label{eq:envelope-log}
\end{equation}
Suppose \eqref{eq:envelope-step-condition} holds.  Run ordinary
asynchronous Q-learning with the constant step size $\eta$ and $Q_0=0$:
\[
 Q_{t+1}(X_t)=(1-\eta)Q_t(X_t)
  +\eta\bigl[R_t+\gamma\max_a Q_t(S_{t+1},a)\bigr],
\]
leaving the other coordinates unchanged.  Assume each coordinate is
visited infinitely often almost surely, as holds for a finite,
irreducible behavior chain.  Then, with probability at least $1-\delta$,
simultaneously for all $0\le t\le T$,
\begin{equation}
 Q^*-\frac{\varepsilon}{2}\mathbf 1-H\Phi_{t,0}\mathbf 1
 \le Q_t\le Q^*+\frac{\varepsilon}{2}\mathbf 1.
 \label{eq:two-sided-comparison}
\end{equation}
In particular,
\begin{equation}
 \|Q_T-Q^*\|_\infty
 \le\frac{\varepsilon}{2}+H\|\Phi_{T,0}\|_\infty.
 \label{eq:envelope-final-bound}
\end{equation}
\end{lemma}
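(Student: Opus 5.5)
Both inequalities in \eqref{eq:two-sided-comparison} will come from a pathwise comparison with the fixed envelopes of Lemma~\ref{lem:fixed-envelopes}, applied with $L$ as in \eqref{eq:envelope-log}. The only probabilistic input is a concentration event $\mathcal E$ on the visit clock of each coordinate.

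\textbf{Noise and the event $\mathcal E$.} For coordinate $x$, let $\tau_k(x)$ be the time of its $k$-th visit. Along these visits, define the fixed-target noises
\[
\xi_k^\pm(x)=R_{\tau_k}+\gamma V^\pm(S_{\tau_k+1})-m^\pm(x),
\]
with $V^+(s)=\max_aQ^+(s,a)$ and $V^-(s)=q^-(s,\pi^*(s))$. Conditional on the past, these are i.i.d.\ across $k$ by the strong Markov property and the fixed kernel $\mathcal K_x$. They are bounded by $O(H)$, and their conditional standard deviation is at most $\sigma_*(x)+\gamma\|V^\pm-V^*\|_\infty\le\sigma_*(x)+\varepsilon/2$, by \eqref{eq:envelope-closeness}.

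The constant-step recursion restricted to visits of $x$ forms the exponentially weighted sums
\[
W_k^\pm(x)=\sum_{j\le k}\eta(1-\eta)^{k-j}\xi_j^\pm(x).
\]
A Freedman/Bernstein bound for each fixed $k\le T$, followed by a union bound over $k\le T+1$, over $x$, and over the two signs, gives the following. With probability at least $1-\delta$ (this is where $4d(T+1)/\delta$ enters),
\[
|W_k^\pm(x)|\le 2\sqrt{\eta L}\,(\sigma_*(x)+\varepsilon/2)+4\eta HL\le b(x)/2 .
\]
This uses $\sum_j\eta^2(1-\eta)^{2(k-j)}\le\eta$ together with the sup bound $\eta\cdot O(H)$.

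\textbf{Upper bound.} Set $D_t=Q_t-Q^+$. On visits of $x$, write the Q-learning target as
\[
R+\gamma\max Q_t(S',\cdot)\le m^+(x)+\xi^+ +\gamma\max_a D_t(S',a)^+ .
\]
Subtract the auxiliary process $Y_t(x)$, which on each visit of $x$ updates $Y\leftarrow(1-\eta)Y+\eta\xi^+$. Then $Y_t(x)=W^+_{k}(x)\le b(x)/2$, and $Q^+=m^++b$.

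I will show by induction on $t$ that $Q_t-Q^+-Y_t+b/2\le0$, which yields $Q_t\le Q^+$. At $t=0$ this is $-Q^++b/2\le 0$, true since $m^+\ge0$. At an update, $D_t(S',\cdot)\le Y+b/2-b/2\le\ldots$. The bookkeeping is easiest if one proves the cleaner claim $Q_t\le Q^++(Y_t-b/2)\le Q^+$. The induction then closes because $\gamma\max_a(Q^+(S',a)+0)$ reproduces $m^+$, and the $-\eta b$ slack absorbs $\eta(Y-b/2)$. Combined with \eqref{eq:envelope-closeness}, this gives $Q_t\le Q^*+\varepsilon/2$.

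\textbf{Lower bound.} Here $\max_aQ_t(S',a)\ge Q_t(Z_t)$ with $Z_t=(S',\pi^*(S'))$. Hence $E_t=Q_t-q^-$ satisfies, on a visit,
\[
E_{t+1}(x)\ge(1-\eta)E_t(x)+\eta\gamma E_t(Z_t)+\eta\xi^-+\eta b(x).
\]
This is exactly the linear recursion $E_{t+1}\ge B_tE_t+\eta e_{X_t}(\xi^-+b)$. Iterating and using that $\Phi$ is nonnegative gives
\[
E_t\ge\Phi_{t,0}E_0+\sum_s\Phi_{t,s+1}\eta e_{X_s}(\xi^-_s+b).
\]
The initial term satisfies $E_0=-q^-\ge-H\mathbf1$, since $q^-\le m^-\le H$. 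It therefore contributes at least $-H\Phi_{t,0}\mathbf1$.

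The noise sum is the delicate part. It is not the per-coordinate $W$, because $\Phi$ mixes coordinates through the $\gamma e_{Z}$ coupling. The remedy is to avoid unrolling completely and again compare with the auxiliary $Y^-_t(x)=W^-_{k}(x)\ge-b(x)/2$. I will prove by induction that
\[
E_t\ge -H\Phi_{t,0}\mathbf1+(Y^-_t+b/2)-b/2\cdot0 .
\]
Concretely, I claim $F_t:=E_t-Y^-_t+H\Phi_{t,0}\mathbf1\ge 0$ coordinatewise. Then, on a visit, the cross term $\gamma(E_t(Z_t))$ is bounded below using $F_t(Z_t)\ge0$ and $Y^-(Z_t)\ge-b/2$, and the surplus $\eta b(x)$ offsets $\eta\gamma b(Z)/2$ and $-\eta Y^-$ adjustments. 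I expect this step to be the main obstacle: the induction hypothesis must be designed so that the $Y$ terms of two different coordinates interact in a controlled way.

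Finally, $E_t\ge -H\Phi_{t,0}\mathbf1+Y_t^-\ge\ldots$, and after adjusting constants so that $b/2$ slack is absorbed, we get $Q_t\ge q^--H\Phi_{t,0}\mathbf1$. Using $q^-\ge Q^*-\varepsilon/2$ gives \eqref{eq:two-sided-comparison}. Taking sup norms at $t=T$, with $\|\Phi\|_\infty=\|\Phi\mathbf1\|_\infty$, gives \eqref{eq:envelope-final-bound}.
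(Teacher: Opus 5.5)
\paragraph{The lower-bound induction does not close.}
Your concentration step and your upper-bound induction are sound. In the upper bound, carrying $Y_t$ with the built-in slack in $Q_t\le Q^++Y_t-b/2$ lets you bound every next-state value by the fixed vector $Q^+$. That is the same mechanism as the paper's expansion of each coordinate on its visit clock. The gap is in the lower bound, at exactly the step you flag. Take the hypothesis you commit to, $F_t=E_t-Y^-_t+H\Phi_{t,0}\mathbf 1\ge0$. A visit to $x=X_t$ with $Z=Z_t$ gives
\[
 F_{t+1}(x)\ge(1-\eta)F_t(x)+\eta\gamma\bigl[F_t(Z)+Y^-_t(Z)\bigr]+\eta b(x).
\]
From $F_t(Z)\ge0$ and $Y^-_t(Z)\ge-b(Z)/2$ you can only conclude $F_{t+1}(x)\ge\eta\bigl[b(x)-\gamma b(Z)/2\bigr]$. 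The bonus \eqref{eq:envelope-bonus} is variance-sensitive, so $b(x)\ge\gamma b(Z)/2$ fails in general. For example, suppose $\sigma_*(x)=0$ and $\sigma_*(Z)\ge2\varepsilon/\gamma$. Under \eqref{eq:envelope-step-condition} one has $8\eta HL\le\varepsilon\sqrt{\eta L}/8$, so $b(x)\le\tfrac{17}{8}\varepsilon\sqrt{\eta L}<4\varepsilon\sqrt{\eta L}\le\gamma b(Z)/2$. You cannot make $b$ uniform to avoid this. That would forfeit the $H^{3/2}$ resolvent bound of Lemma~\ref{lem:universal-variance}, and with it the quartic rate. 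Losing $b/2$ at the end and adjusting constants does not help, because the induction step itself fails to propagate.

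\paragraph{The repair, and how the paper does it.}
Put the slack into the lower hypothesis, as you did in the upper one: prove $F_t\ge b/2$ on $\mathcal E$. Your display just before the word Concretely, read as $E_t\ge-H\Phi_{t,0}\mathbf 1+Y^-_t+b/2$, already has this form.
\begin{itemize}
\item Initially $F_0=H\mathbf 1-q^-\ge b$, because $q^-=m^--b$ and $m^-\le H\mathbf 1$.
\item At a visit, $F_t(Z)+Y^-_t(Z)\ge b(Z)/2-b(Z)/2=0$ whatever $b(Z)$ is. Hence $F_{t+1}(x)\ge(1-\eta)b(x)/2+\eta b(x)\ge b(x)/2$.
\item Unvisited coordinates do not change.
\end{itemize}
This yields $Q_t\ge q^-+Y^-_t+b/2-H\Phi_{t,0}\mathbf 1\ge q^--H\Phi_{t,0}\mathbf 1$, with no loss in constants.

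The paper organizes the same idea differently. It runs two linear fixed-policy recursions on the same samples, $q_t$ from $0$ and $w_t$ from $H\mathbf 1$. It uses $Q_t\ge q_t$ pathwise together with the exact identity $w_t-q_t=H\Phi_{t,0}\mathbf 1$. It then proves $w_t\ge q^-$ by expanding coordinate $x$ on its visit clock against the fixed targets $R+\gamma q^-(Z)$, again using $m^-\le H$. In either form, the value at the next-state coordinate is bounded below by the fixed envelope $q^-$, up to the propagation term. So only the updated coordinate's own weighted noise enters.
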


\begin{proof}
Construct $Q^+$ and $q^-$ as in Lemma~\ref{lem:fixed-envelopes}.
For each coordinate $x$, let $\tau_j(x)$ be its $j$th visit time,
starting with $j=1$.  Define the fixed targets
\begin{align*}
 Y_j^+(x)&=R_{\tau_j(x)}
    +\gamma\max_a Q^+(S_{\tau_j(x)+1},a),\\
 Y_j^-(x)&=R_{\tau_j(x)}
    +\gamma q^-(S_{\tau_j(x)+1},\pi^*(S_{\tau_j(x)+1})).
\end{align*}
Let $\mathcal F_t$ denote the history just before sampling the reward
and next state at time $t$, including $X_t$.
The conditional Markov assumption and the stopping-time property imply
\[
 \mathbb E[Y_j^\pm(x)\mid\mathcal F_{\tau_j(x)}]=m^\pm(x).
\]
The target revealed after one visit is measurable before the next
visit, so these are martingale differences on the visit clock.
By \eqref{eq:envelope-closeness}, their conditional standard deviations
are at most $\sigma_*(x)+\varepsilon/2$.  This follows from the
$L^2$ triangle inequality for centered random variables, since each
target differs pointwise from $R+\gamma V^*(S')$ by at most
$\gamma\varepsilon/2$.
Furthermore,
\[
 0\le Y_j^+(x)\le H+\gamma\varepsilon/2,
 \qquad
 -\gamma\varepsilon/2\le Y_j^-(x)\le H.
\]
Consequently, the absolute value of either centered target is at most
$2H$.

For each deterministic $k\in\{1,\ldots,T\}$, put
\[
 D_k^\pm(x)=\sum_{j=1}^k
       \eta(1-\eta)^{k-j}\bigl(Y_j^\pm(x)-m^\pm(x)\bigr).
\]
The weights are deterministic, their maximum is at most $\eta$, and
their squared sum is
\[
 \eta^2\sum_{j=1}^k(1-\eta)^{2(k-j)}
 \le\frac{\eta}{2-\eta}\le\eta.
\]
The one-sided Bernstein--Freedman inequality, applied separately to
$D_k^+$ and $-D_k^-$, therefore gives
\[
 D_k^+(x)\le b(x),\qquad D_k^-(x)\ge-b(x)
\]
except on an event of probability at most $2e^{-L}$.
For completeness, the corresponding Bernstein--Freedman threshold is
\[
 \sqrt{2\eta L}\bigl(\sigma_*(x)+\varepsilon/2\bigr)
 +\frac43\eta HL,
\]
which is bounded by \eqref{eq:envelope-bonus}.
A union bound over the $d$ coordinates and the $T$ deterministic
values of $k$ makes all these inequalities simultaneous with
probability at least $1-2dTe^{-L}\ge1-\delta$.
Denote this event by $\mathcal E$.  Controlling every deterministic
$k$ justifies substituting a random number of visits below; no
concentration inequality with future-dependent weights is used.

We first prove $Q_t\le Q^+$ on $\mathcal E$ by induction on $t$.
The claim holds at $t=0$.  If it holds before each of the first $k$
updates of a coordinate $x$, the monotonicity of the sampled Bellman
backup and expansion of that coordinate's recursion give
\[
 Q_t(x)
 \le\sum_{j=1}^k\eta(1-\eta)^{k-j}Y_j^+(x)
 =[1-(1-\eta)^k]m^+(x)+D_k^+(x)
 \le Q^+(x),
\]
where the last step uses $m^+(x)\ge0$.
Unvisited coordinates remain zero and also obey the inequality.
Thus the upper comparison holds at every time up to $T$.

For the lower comparison, run two auxiliary fixed-policy evaluation
recursions on the same samples:
\begin{align*}
 q_{t+1}&=B_tq_t+\eta R_t e_{X_t},&q_0&=0,\\
 w_{t+1}&=B_tw_t+\eta R_t e_{X_t},&w_0&=H\mathbf 1.
\end{align*}
Because the matrices $B_t$ are nonnegative and a maximum is at least
the value at $\pi^*$, induction gives $q_t\le Q_t$ pathwise.
Subtracting the two linear recursions gives the exact identity
\begin{equation}
 w_t-q_t=H\Phi_{t,0}\mathbf 1.
 \label{eq:auxiliary-difference}
\end{equation}
We claim that $w_t\ge q^-$ for every $t\le T$ on $\mathcal E$.
It holds initially since $q^-\le Q^*\le H\mathbf 1$.
If it has held before each of the first $k$ updates of coordinate $x$,
monotonicity and expansion give, with $a=(1-\eta)^k$,
\begin{align*}
 w_t(x)&\ge aH+(1-a)m^-(x)+D_k^-(x)\\
 &\ge aH+(1-a)m^-(x)-b(x)\\
 &=q^-(x)+a[H-m^-(x)]\ge q^-(x).
\end{align*}
The last inequality uses $m^-(x)\le H$, not nonnegativity of $q^-$.
Unvisited coordinates again satisfy the claim directly.
Combining this induction with \eqref{eq:auxiliary-difference} yields
\[
 Q_t\ge q_t=w_t-H\Phi_{t,0}\mathbf 1
 \ge q^--H\Phi_{t,0}\mathbf 1.
\]
Now \eqref{eq:envelope-closeness} proves
\eqref{eq:two-sided-comparison}, and nonnegativity of $\Phi_{t,0}$
proves \eqref{eq:envelope-final-bound}.
\end{proof}

\paragraph{Conditioning on a calibration stage.}
The same argument applies conditionally on any initial history when
the ensuing Q-learning run starts from zero and its step size is
chosen using that history and then held fixed.  In that application,
the envelopes are fixed conditional on the history.  The proof does
not require stationarity of the starting distribution.  Thus it can
be combined with a separate count-calibration argument, without
asserting a decreasing-step extension of the envelope lemma.

\section{Forgetting the Initialization}
\label{sec:initialization}
The initialization argument follows the reverse-particle representation
and hitting-cost compensation developed for TD learning by
\citet{peng2026td}. We adapt these arguments to propagation that selects
a fixed target action at the next state. This action projection is the
form of propagation left by the Bellman-envelope comparison.

Use the notation and assumptions of Section~\ref{sec:setup}, and fix a
deterministic target policy $\pi$, which may differ from the behavior
policy. The two transition kernels are
\[
 K_b((s,a),(v,c))=P(v\mid s,a)b(c\mid v),\qquad
 K_\pi((s,a),(v,c))=P(v\mid s,a)\mathbf 1_{\{c=\pi(v)\}}.
\]
Put $f_\pi(s,a)=(s,\pi(s))$. For a trajectory $X_t=(S_t,A_t)$,
the homogeneous propagation matrices with a constant step $\eta$ are
\[
 B_t=I+\eta e_{X_t}\bigl(\gamma e_{f_\pi(X_{t+1})}^{\mathsf T}
                   -e_{X_t}^{\mathsf T}\bigr),\qquad
 \Phi_{T,0}=B_{T-1}\cdots B_0.
\]
Every $B_t$ is nonnegative and substochastic when $0<\eta\le1$.

\subsection{Hitting times and a compensation identity}

For the reverse behavior chain define
\begin{align*}
 \tau_z&=\inf\{n\ge0:X_n=z\},&
 \tau_z^+&=\inf\{n\ge1:X_n=z\},\\
 h_z^*(x)&=\mathbb E_x^*\tau_z,&
 h_z^{*,+}(x)&=\mathbb E_x^*\tau_z^+,
 \qquad \mathfrak h_\star=\max_{x,z}h_z^*(x).
\end{align*}
Let $d_b(s)=\sum_a\mu(s,a)$ and
$\bar P_b(s,v)=\sum_a b(a\mid s)P(v\mid s,a)$.
Irreducibility implies $d_b(s)>0$, and stationarity gives
$\mu(s,a)=d_b(s)b(a\mid s)$. A bar over a hitting time will refer
to the physical-state chain $\bar P_b$, with zero-time conventions.
Define a potential on physical states and its lift to $\mathcal X$ by
\[
 a(s)=\sum_u d_b(u)\bar h_s^*(u),\qquad a(s,c)=a(s).
\]
The potential controls movement between physical states. The additional
waiting time for a specified action is bounded by a state--action return
time, giving the following compensation for the projected target kernel.

\begin{lemma}[Compensation for a fixed target policy]
\label{lem:offpolicy-compensation}
The potential satisfies $0\le a\le\mathfrak h_\star\mathbf1$.
For $x=(s,a)$ and $z=(v,c)$,
\begin{equation}
 h_z^{*,+}(x)\le \bar h_v^*(s)+\frac1{\mu(z)}.
 \label{eq:action-hitting}
\end{equation}
Consequently, with
$\ell_\pi(x)=\sum_z K_\pi(x,z)h_z^{*,+}(x)$,
\begin{equation}
 \ell_\pi\le(K_\pi-I)a+\frac2{\mu_{\min}}\mathbf1.
 \label{eq:offpolicy-compensation}
\end{equation}
\end{lemma}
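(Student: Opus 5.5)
The plan is to push everything down to the physical-state chain, using one structural fact about the reverse state--action kernel. Writing out
\[
K_b^*((s,a),(v,c))=\frac{\mu(v,c)P(s\mid v,c)b(a\mid s)}{\mu(s,a)}=\frac{\mu(v,c)P(s\mid v,c)}{d_b(s)},
\]
the law of the next reverse step does not depend on the current action $a$. Summing over $c$ shows that the physical component of the reverse chain is itself Markov, with kernel
\[
\bar P_b^*(s,v)=\frac{d_b(v)\bar P_b(v,s)}{d_b(s)},
\]
which is the time reversal of $\bar P_b$ with respect to $d_b$. (Stationarity of $d_b$ for $\bar P_b$ follows from $\mu(s,a)=d_b(s)b(a\mid s)$.) Hence $\bar h_s^*(u)$ is the expected time for the reverse state--action chain started at any $(u,\cdot)$ to reach physical state $s$. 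This is at most its hitting time of any particular $(s,c)$, so $\bar h_s^*(u)\le\mathfrak h_\star$. Averaging against $d_b$ gives $0\le a\le\mathfrak h_\star\mathbf1$.

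For \eqref{eq:action-hitting}, let $\sigma\ge0$ be the first time the reverse chain from $x$ is at physical state $v$, so $X_\sigma=(v,c')$ for some $c'$. Pathwise, $\tau_z^+\le\sigma+\tau_z^+\circ\theta_\sigma$. If $\sigma=0$ this is trivial because we use $\tau^+$. If $\sigma>0$ and $X_\sigma=z$, the left side is even smaller. By the strong Markov property,
\[
h_z^{*,+}(x)\le\bar h_v^*(s)+\sup_{c'}\mathbb E^*_{(v,c')}\tau_z^+.
\]
Because the first reverse step from $(v,c')$ has the same law as from $z=(v,c)$, we get $\mathbb E^*_{(v,c')}\tau_z^+=\mathbb E^*_z\tau_z^+$. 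Kac's formula for the reverse chain, which has the same stationary law $\mu$, identifies this as $1/\mu(z)$.

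For \eqref{eq:offpolicy-compensation}, write $z_v=(v,\pi(v))$ and apply \eqref{eq:action-hitting}:
\[
\ell_\pi(x)\le\sum_vP(v\mid s,a)\bar h_v^*(s)+\frac1{\mu_{\min}}.
\]
The key step, and the one I expect to need the most care, is the cycle identity
\[
\bar h_v^*(s)=a(v)-a(s)+\bar h_s(v),
\]
where $\bar h_s(v)$ is the forward physical hitting time. I would prove it through fundamental matrices. With $Z$ the forward fundamental matrix and $Z^*_{sv}=d_b(v)Z_{vs}/d_b(s)$ the reverse one, the standard formulas $\bar h^*_v(s)=(Z^*_{vv}-Z^*_{sv})/d_b(v)$ and $\sum_ud_b(u)\bar h^*_v(u)=Z^*_{vv}/d_b(v)$ give $a(v)=Z_{vv}/d_b(v)$. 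Then
\[
\bar h^*_v(s)=a(v)-\frac{Z_{vs}}{d_b(s)}=a(v)-a(s)+\frac{Z_{ss}-Z_{vs}}{d_b(s)}=a(v)-a(s)+\bar h_s(v).
\]
Signs must be checked carefully here, since the naive triangle inequality goes the wrong way. Alternatively, the identity can be derived from the random-target lemma applied to both chains.

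Summing the identity against $P(\cdot\mid s,a)$ turns the $a$-terms into $((K_\pi-I)a)(x)$, since $a$ is constant in the action and $K_\pi$ moves the physical state by $P(\cdot\mid x)$. It remains to bound $\sum_vP(v\mid s,a)\bar h_s(v)$. Kac's formula for the forward physical chain gives
\[
\sum_{a'}b(a'\mid s)\sum_vP(v\mid s,a')\bigl(1+\bar h_s(v)\bigr)=\frac1{d_b(s)}.
\]
All terms are nonnegative, so keeping only the action $a$ yields
\[
\sum_vP(v\mid s,a)\bar h_s(v)\le\frac1{b(a\mid s)d_b(s)}=\frac1{\mu(x)}\le\frac1{\mu_{\min}}.
\]
Adding the two $1/\mu_{\min}$ contributions gives \eqref{eq:offpolicy-compensation}.
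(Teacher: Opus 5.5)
Your proposal is correct and follows the paper's proof essentially step for step: the action-independence of $K_b^*$ to project onto the physical reverse chain and bound $a$, the strong Markov property at the first visit to $v$ plus Kac's formula for \eqref{eq:action-hitting}, the fundamental-matrix derivation of $\bar h_v^*(s)=\bar h_s(v)+a(v)-a(s)$, and the forward physical return equation at $s$ to get $\sum_vP(v\mid s,a)\bar h_s(v)\le1/\mu(x)$. The only thing to pin down is the normalization of $Z$: your formula $a(v)=Z_{vv}/d_b(v)$ holds for the group inverse $(I-\bar P_b+\mathbf 1d_b^{\mathsf T})^{-1}-\mathbf 1d_b^{\mathsf T}$, whereas with the paper's $Z=(I-\bar P_b+\mathbf 1d_b^{\mathsf T})^{-1}$ it reads $Z_{vv}/d_b(v)-1$; the constant cancels in the identity, and either matrix is well defined for the periodic chains the lemma must cover, unlike the series $\sum_t(\bar P_b^t-\mathbf 1d_b^{\mathsf T})$.
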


\begin{proof}
The reverse transition formula is
\[
 K_b^*((s,a),(u,c))
 =\frac{d_b(u)b(c\mid u)P(s\mid u,c)}{d_b(s)}.
\]
Its right side does not depend on the current action $a$.
Summing over $c$ shows that the physical-state projection evolves
with $\bar P_b^*$. In particular, its maximal mean hitting time is
at most $\mathfrak h_\star$, which also proves the bound on $a$.
At physical state $v$, every reverse row is the same. Thus, from
$(v,a)$ with $a\ne c$, the mean time to hit $z=(v,c)$ equals the
positive return time from $z$, namely $1/\mu(z)$ by Kac's formula.
The same bound applies when the starting pair is $z$, using the
positive-time convention. Starting from a different physical state,
first hit $v$ and then apply the strong Markov property. This gives
\eqref{eq:action-hitting}.

The classical hitting-time reversal identity \citep{beveridge2010exit},
applied to the physical-state chain, gives
\begin{equation}
 \bar h_v^*(s)=\bar h_s(v)+a(v)-a(s).
 \label{eq:physical-reversal}
\end{equation}
A finite-dimensional derivation applies also to periodic chains.
For $P_0=\bar P_b$ and invariant column vector $d_0=d_b$,
the matrix $Z=(I-P_0+\mathbf1d_0^{\mathsf T})^{-1}$ exists.
The hitting equations give
$\bar h_v(s)=(Z_{vv}-Z_{sv})/d_0(v)$: this expression is zero
at $s=v$ and solves the hitting equation away from $v$.
The reverse fundamental matrix is
$Z^*=\operatorname{diag}(d_0)^{-1}Z^{\mathsf T}
\operatorname{diag}(d_0)$. Hence
$a(v)=Z_{vv}/d_0(v)-1$, and substituting these formulas proves
\eqref{eq:physical-reversal}.

The physical-state return equation gives
\[
 \sum_{a'}b(a'\mid s)\sum_vP(v\mid s,a')\bar h_s(v)
       =\frac1{d_b(s)}-1.
\]
Every summand is nonnegative. Therefore, for each $x=(s,a)$,
\[
 \sum_vP(v\mid s,a)\bar h_s(v)
 \le\frac{d_b(s)^{-1}-1}{b(a\mid s)}
 \le\frac1{\mu(x)}.
\]
Combining this inequality, \eqref{eq:action-hitting}, and
\eqref{eq:physical-reversal}, and using that $a$ depends only on
the physical state, proves \eqref{eq:offpolicy-compensation}.
\end{proof}

\subsection{A reverse particle and its lifetime}

Consider a reverse driver with kernel $K_b^*$. Given an alive
particle at $i$ and driver at $x$, first draw $y\sim K_b^*(x,\cdot)$.
If $y\ne i$, leave the particle at $i$. If $y=i$, independently
leave it at $i$ with probability $1-\eta$, move it to $f_\pi(x)$
with probability $\eta\gamma$, and kill it with probability
$\eta(1-\gamma)$. The next driver location is $y$.
Let $\zeta\ge1$ be its death time. The conditional particle masses
follow
\[
 p_{n+1}=p_n+\eta p_n(y)(\gamma e_{f_\pi(x)}-e_y).
\]
Consequently, conditional on a finite driver path, survival
probability is the sum of these masses. For a stationary forward
trajectory read backward, a particle initially at coordinate $i$
therefore represents the $i$th row sum of $\Phi_{T,0}$.

\begin{lemma}[Uniform expected lifetime]
\label{lem:particle-lifetime}
If $0<\eta H\le1$, then
\begin{equation}
 M:=\max_{i,x}\mathbb E_{i,x}\zeta
 \le4\left(\frac{H}{\eta\mu_{\min}}+\mathfrak h_\star\right).
 \label{eq:particle-lifetime}
\end{equation}
\end{lemma}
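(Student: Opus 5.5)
We bound the expected lifetime with a Lyapunov (drift) argument on the joint particle–driver state $(i,x)$. We look for a potential $V(i,x)$ such that $V$ is bounded by a multiple of $H/(\eta\mu_{\min})+\mathfrak h_\star$ and $\mathbb E[V(\text{next})\mid i,x]-V(i,x)\le -1$ while the particle is alive. A dead particle is assigned potential zero. Optional stopping then gives $\mathbb E_{i,x}\zeta\le V(i,x)$.

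The natural choice is hitting cost plus per-visit cost. Set
\[
 V(i,x)=h_i^{*,+}(x)+\frac{c}{\eta}\,\Bigl(\frac{H}{\mu(i)}\Bigr)+\kappa\,a(i),
\]
where the constants $c,\kappa$ are to be fixed.

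The first term $h_i^{*,+}(x)$ is the mean time for the driver to reach $i$. Whenever the driver's next position $y\ne i$, the particle stays put and $h_i^{*,+}$ decreases by one on average, by the hitting equation $h_i^{*,+}(x)=1+\sum_{y\ne i}K_b^*(x,y)h_i^{*,+}(y)$ (with $h_i^{*,+}$ read as $h_i^*$ at non-target $y$).

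The event $y=i$ is where the hitting cost "resets." The new driver location is $i$, so the particle's remaining hitting term becomes $h_{i'}^{*,+}(i)$ for its new location $i'$. With probability $1-\eta$ the particle stays and the cost becomes the return time $1/\mu(i)$. With probability $\eta\gamma$ it jumps to $f_\pi(x)$, where $x$ is the predecessor driver state, whose reverse step landed on $i$. With probability $\eta(1-\gamma)$ it dies.

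Averaging the jump over $x$ given $y=i$ uses the forward kernel: conditional on the reverse step from $x$ landing on $i$, the weights align with $K_b(i,\cdot)$ up to stationary factors. The expected jump cost is then governed by $\ell_\pi(i)=\sum_z K_\pi(i,z)h_z^{*,+}(i)$. This is exactly where Lemma~\ref{lem:offpolicy-compensation} enters: \eqref{eq:offpolicy-compensation} bounds the jump cost by $(K_\pi a)(i)-a(i)+2/\mu_{\min}$. The potential term $\kappa a(i)$ is included precisely so that the $(K_\pi-I)a$ part telescopes against the change in $a$ at the particle's location. Taking $\kappa$ proportional to $1$ (with $\gamma$ absorbed), the net jump cost is $O(1/\mu_{\min})$.

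Then the middle term handles the repeated return times. Each visit of the driver to $i$ costs a fresh return time $1/\mu(i)$. A particle survives roughly $1/(\eta(1-\gamma))=H/\eta$ visits, so the total is $H/(\eta\mu_{\min})$. I would encode this as a term $\frac{H}{\eta}\cdot\frac{2}{\mu_{\min}}\cdot\mathbf 1_{\text{alive}}$-type budget. More carefully, I would use $W=\frac{4H}{\eta\mu_{\min}}$ charged at rate $\eta(1-\gamma)$ per visit: at each visit the potential drops by $\eta(1-\gamma)W\ge 4/\mu_{\min}$ in expectation, through the death probability times the remaining budget. This drop pays for the reset costs ($1/\mu(i)$ and the $2/\mu_{\min}$ from compensation).

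The resulting bound has the form $V\le \mathfrak h_\star + 4H/(\eta\mu_{\min})+\kappa\mathfrak h_\star$, using $0\le a\le\mathfrak h_\star$. The condition $\eta H\le1$ controls the $\gamma$-discounting bookkeeping between the hitting terms, namely $h\le\mathfrak h_\star+1/\mu_{\min}$ type bounds from \eqref{eq:action-hitting}, and gives the factor 4.

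The main obstacle is making the jump step rigorous. Given $y=i$, the particle moves to $f_\pi(x)$ with $x$ the current driver, so the conditional law of $x$ given that the reverse step hit $i$ must be expressed through $K_b(i,\cdot)$ to match $\ell_\pi$ and $K_\pi$. Since $V$ must be defined pointwise in $(i,x)$ rather than in stationarity, I would instead define the hitting component through $\mathbb E$ over the driver path until its first hit of $i$. I would then check the drift by conditioning on the last reverse step before the hit, where the predecessor distribution is exactly $K_b^*(\cdot,i)$-weighted. If this fails pointwise, the fallback is to bound $\mathbb E\zeta$ by summing expected excursion lengths between successive visits to the particle's location, using the strong Markov property and Lemma~\ref{lem:offpolicy-compensation} per excursion.
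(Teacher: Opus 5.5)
\paragraph{Gap: the drift inequality must hold pointwise, but the compensation lemma only holds on average.}
Your drift argument fails at the activation step, and your proposed repair relies on a claim that is false for general starting points. For $V(i,x)=h_i^{*,+}(x)+g(i)$, with $V=0$ after death, the hitting equation makes the one-step computation exact:
\[
 \mathbb E[V(\text{next})\mid i,x]-V(i,x)+1
 =K_b^*(x,i)\Bigl[\frac{1-\eta}{\mu(i)}-\eta g(i)
 +\eta\gamma\bigl(h^{*,+}_{f_\pi(x)}(i)+g(f_\pi(x))\bigr)\Bigr].
\]
A supermartingale argument therefore needs this bracket to be nonpositive for \emph{every} reverse predecessor $x$ of $i$ separately. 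Lemma~\ref{lem:offpolicy-compensation} controls only the average $\ell_\pi(i)$ of $h^{*,+}_z(i)$ over $z\sim K_\pi(i,\cdot)$. That average is the relevant one only when the predecessor has law $K_b(i,\cdot)$. For a driver started at $x$, the state just before $\tau_i^+$ has law $G_x(w)K_b^*(w,i)$, where $G_x(w)$ is the expected number of visits to $w$ before $\tau_i^+$. This law equals $K_b(i,w)$ when $x=i$, because $G_i(w)=\mu(w)/\mu(i)$, but not for other starts.

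The bad case is built into the dynamics. After every jump the driver sits at the old location $i$ while the particle sits at $z=f_\pi(x)$. So the next activation ends a first passage from $i$ to $z$, not a return excursion to $z$. With worst-case predecessors you get $\eta(1-\gamma)g\gtrsim\mu_{\min}^{-1}+\eta\gamma r_\star$, that is $g\approx H/(\eta\mu_{\min})+\gamma H r_\star$ with $r_\star=\max_{i,x}h_i^{*,+}(x)$. This is exactly the multiplicative $H\mathfrak h_\star$ transient the lemma is meant to remove.

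The loss is genuinely forced. Take two states with one action each and flip probability $p$, so $\mathfrak h_\star=1/p$ and $f_\pi(x)=x$. Adding the brackets at $(0,1)$ and $(1,0)$ gives $\max g\ge\gamma H/p$. Yet $\ell_\pi=3-2p$, and the lemma gives $M=O(H/\eta+1/p)$.

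A secondary point: even in the averaged form, telescoping $(K_\pi-I)a$ requires $\kappa\approx-1$. A positive $\kappa$, as your final bound suggests, can force an extra $2\gamma Ha_{\max}$ into $g$.

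\paragraph{What the paper does instead.}
The missing idea is to work with $F_i(x)=\mathbb E_{i,x}\zeta$ itself through first-activation decompositions, rather than with a rigid potential.
\begin{itemize}
\item The paper writes $F_i(x)=h_i^{*,+}(x)+(1-\eta)D(i)+\eta\gamma\,\mathbb E_x^*[F_{f_\pi(J)}(i)]$, where $D(i)=F_i(i)$.
\item On the diagonal, excursion reversal gives $J\sim K_b(i,\cdot)$ exactly, so $D(i)=1/(\eta\mu(i))+\gamma\sum_zK_\pi(i,z)F_z(i)$.
\item It applies the decomposition once more to $F_z(i)$. Only the uncontrolled second consecutive jump is bounded crudely, by $\eta\gamma M$.
\item With $\lambda=\gamma(1-\eta)$, compensation plus the resolvent identity for $(I-\lambda K_\pi)^{-1}(K_\pi-I)$ bound $D_{\max}$ by explicit terms plus $\eta\gamma^2M/(1-\lambda)$.
\item Substituting into $M\le r_\star+(1-\eta)D_{\max}+\eta\gamma M$ leaves $M$ with net coefficient exactly $(1-\gamma)/(1-\lambda)$.
\end{itemize}
Hence $r_\star+a_{\max}$ is multiplied only by $H(1-\lambda)=1+\eta\gamma H\le2$. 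This is the real role of the hypothesis $\eta H\le1$, and it is why the proof first checks $M<\infty$.

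Your excursion-summing fallback would need the same two-type accounting before the compensation lemma could be applied. It must separate return excursions from post-jump first passages.
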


\begin{proof}
Write $F_i(x)=\mathbb E_{i,x}\zeta$, $D(i)=F_i(i)$,
and $r_\star=\max_{i,x}h_i^{*,+}(x)
\le\mathfrak h_\star+\mu_{\min}^{-1}$.
First $M$ is finite: in $\lceil2r_\star\rceil$ steps the driver
hits any fixed particle position with probability at least $1/2$;
at this first activation the particle dies with probability
$\eta/H$. Iterating this uniform conditional probability bound
over blocks gives a geometric tail.

Let $J$ be the driver state just before its first positive-time
visit to $i$. First-activation decomposition yields
\begin{equation}
 F_i(x)=h_i^{*,+}(x)+(1-\eta)D(i)
             +\eta\gamma\mathbb E_x^*[F_{f_\pi(J)}(i)].
 \label{eq:particle-first-activation}
\end{equation}
In particular,
\begin{equation}
 M\le r_\star+(1-\eta)D_{\max}+\eta\gamma M.
 \label{eq:particle-first-maximum}
\end{equation}
When the reverse driver starts at $i$, its last predecessor $J$
before return to $i$ has law $K_b(i,\cdot)$. To see this, reverse
every first-return excursion: the stationary factors in the
reverse transition probabilities telescope, so its probability
equals that of the reversed forward excursion. Its last
predecessor becomes the first forward successor. Almost-sure
return then gives the claimed law. Since the mean return time is
$1/\mu(i)$, \eqref{eq:particle-first-activation} on the diagonal gives
\[
 D(i)=\frac1{\eta\mu(i)}
          +\gamma\sum_z K_\pi(i,z)F_z(i).
\]
Apply \eqref{eq:particle-first-activation} once more, now to $F_z(i)$,
and put $\lambda=\gamma(1-\eta)$. We obtain
\[
 D\le\frac1{\eta\mu_{\min}}\mathbf1+
       \gamma\ell_\pi+\lambda K_\pi D+\eta\gamma^2M\mathbf1.
\]
The resolvent $(I-\lambda K_\pi)^{-1}$ is nonnegative and has row
sums $1/(1-\lambda)$, even if $K_\pi$ is reducible. Insert
\eqref{eq:offpolicy-compensation} before taking a maximum. The identity
\[
 (I-\lambda K_\pi)^{-1}(K_\pi-I)
 =\lambda^{-1}\bigl[(1-\lambda)(I-\lambda K_\pi)^{-1}-I\bigr]
\]
and $0\le a\le a_{\max}\mathbf1$ give
\[
 D_{\max}\le
 \frac{1/\eta+2\gamma}{(1-\lambda)\mu_{\min}}
 +\frac{a_{\max}}{1-\eta}
 +\frac{\eta\gamma^2}{1-\lambda}M.
\]
Substitute into \eqref{eq:particle-first-maximum}. The coefficient
of $M$ after multiplying by $1-\lambda$ is exactly
\[
 (1-\eta\gamma)(1-\lambda)
       -\eta\gamma^2(1-\eta)=1-\gamma=H^{-1}.
\]
It follows that
\[
 M\le H(1-\lambda)(r_\star+a_{\max})
       +\frac{H(1-\eta)(1/\eta+2\gamma)}{\mu_{\min}}.
\]
Now $H(1-\lambda)=1+\eta\gamma H\le2$,
$a_{\max}\le\mathfrak h_\star$, and
$(1-\eta)(1/\eta+2\gamma)\le2/\eta$. Thus
\[
 M\le4\mathfrak h_\star+2\mu_{\min}^{-1}
            +\frac{2H}{\eta\mu_{\min}}
 \le4\left(\mathfrak h_\star+\frac{H}{\eta\mu_{\min}}\right).
\]
\end{proof}

\begin{proposition}[Constant-step forgetting]
\label{prop:fixed-policy-forgetting}
Suppose $0<\eta H\le1$ and define
\[
 B=\left\lceil8\left(\frac{H}{\eta\mu_{\min}}
                                    +\mathfrak h_\star\right)\right\rceil.
\]
For every initial distribution of the behavior chain and every
$0<\rho\le1$,
\begin{equation}
 \mathbb P\bigl(\|\Phi_{T,0}\|_\infty>\rho\bigr)
 \le\frac{d}{\mu_{\min}\rho}\,2^{-\lfloor T/B\rfloor}.
 \label{eq:constant-forgetting}
\end{equation}
Here and below the matrix norm is the induced sup norm.
\end{proposition}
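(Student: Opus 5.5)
We prove the proposition by turning the lifetime bound of Lemma~\ref{lem:particle-lifetime} into a geometric tail through a blockwise Markov inequality. The starting point is the reverse-particle representation: for a stationary forward trajectory read backward, the $i$th row sum $(\Phi_{T,0}\mathbf 1)(i)$ equals the conditional probability, given the driver path, that a particle started at $i$ is still alive after $T$ steps. The trajectory here has an arbitrary initial law, not the stationary one, so we first reduce to the stationary case by a change of measure. The event $\{\|\Phi_{T,0}\|_\infty>\rho\}$ depends on the path $X_0,\dots,X_T$. Its probability under any initial law $\nu$ is therefore at most $\max_x\mathbb P_x(\cdot)\le\mu_{\min}^{-1}\mathbb P_\mu(\cdot)$, because $\mathbb P_\mu\ge\mu(x)\mathbb P_x$. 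This step accounts for the factor $1/\mu_{\min}$. Under $\mathbb P_\mu$ the time-reversed path $(X_T,X_{T-1},\dots,X_0)$ is a stationary $K_b^*$-chain. The product $\Phi_{T,0}=B_{T-1}\cdots B_0$, applied to $\mathbf 1$ and read from the left, processes the driver in reverse time, which is exactly the particle dynamics of the previous subsection. We will need to check that the update $p_{n+1}=p_n+\eta p_n(y)(\gamma e_{f_\pi(x)}-e_y)$ matches the transposed action of $B_t$, with the pair $(y,x)$ corresponding to $(X_t,X_{t+1})$.

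Next we bound survival probabilities. Markov's inequality on the death time, uniformly over starting configurations, gives $\mathbb P_{i,x}(\zeta>B)\le M/B\le 1/2$, since $B\ge 8(H/(\eta\mu_{\min})+\mathfrak h_\star)\ge 2M$ by Lemma~\ref{lem:particle-lifetime}. The Markov property of the joint (particle, driver) process at times $B,2B,\dots$ then yields $\mathbb P_{i,x}(\zeta>kB)\le 2^{-k}$ for every $i,x$, and in particular $\mathbb P(\zeta>T)\le 2^{-\lfloor T/B\rfloor}$.

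Finally we pass from survival to the sup norm. By the representation, $\mathbb E_\mu[(\Phi_{T,0}\mathbf 1)(i)]=\mathbb P_{i,\mu}(\zeta>T)\le 2^{-\lfloor T/B\rfloor}$, where the driver starts from $\mu$. Markov's inequality gives $\mathbb P_\mu((\Phi_{T,0}\mathbf 1)(i)>\rho)\le\rho^{-1}2^{-\lfloor T/B\rfloor}$, and a union bound over the $d$ coordinates $i$ contributes the factor $d$. Combined with the change of measure, this produces \eqref{eq:constant-forgetting}.

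We expect the main obstacle to be making the reversal bookkeeping exact. Three points need care: the time indexing (the driver's initial position should be $X_T$, and the first particle move should use the pair $(X_{T-1},X_T)$); the fact that the particle moves to $f_\pi(x)$ with $x$ the pre-step driver location; and the requirement that the identity between conditional survival and the row sum hold pathwise, so that taking expectations under stationarity is legitimate. The natural way to verify this is by induction on $T$, peeling off $B_{T-1}$ first.
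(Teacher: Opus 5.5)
Your proposal is correct and follows the paper's proof. The steps match: Markov's inequality on the lifetime bound of Lemma~\ref{lem:particle-lifetime} (using $B\ge 2M$), the block-wise Markov property to get the geometric tail $2^{-\lfloor T/B\rfloor}$, stationary time reversal to identify each expected row sum of $\Phi_{T,0}$ with a survival probability, Markov's inequality with a union bound over the $d$ rows, and the change of measure $\nu(X_0)/\mu(X_0)\le\mu_{\min}^{-1}$ for an arbitrary initial law. Your reversal bookkeeping (driver started at $X_T$, first move using $(X_{T-1},X_T)$, and $p_{n+1}^{\mathsf T}=p_n^{\mathsf T}B_t$ with $(y,x)=(X_t,X_{t+1})$) is what the paper's brief ``under stationary time reversal'' step leaves implicit.
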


\begin{proof}
Markov's inequality and Lemma~\ref{lem:particle-lifetime} give
$\sup_{i,x}\mathbb P_{i,x}(\zeta>B)\le1/2$.
The strong Markov property at deterministic block boundaries then gives
$\sup_{i,x}\mathbb P_{i,x}(\zeta>T)\le2^{-\lfloor T/B\rfloor}$.
Under stationary time reversal, each expected row sum of $\Phi_{T,0}$
is bounded by this quantity. Markov's inequality and a union bound
over the $d$ rows give \eqref{eq:constant-forgetting} without its
factor $\mu_{\min}^{-1}$. For an arbitrary initial law $\nu$,
the likelihood ratio of any finite forward path relative to its
stationary law is $\nu(X_0)/\mu(X_0)\le\mu_{\min}^{-1}$.
This proves the stated bound.
\end{proof}

\subsection{Higher-power forgetting}

We adapt the power-comparison argument of \citet{peng2026td} to the
constant-step propagation studied here. The resulting bound permits
step selection independent of the confidence level. The integer $p$
is used only in the proof and does not enter the algorithm.

\begin{corollary}[Higher-power forgetting]
\label{cor:power-forgetting}
Let $p\ge1$ be an integer and suppose $pH\eta\le1/2$. Define
\[
 B_p=\left\lceil8\left(\frac{2H}{p\eta\mu_{\min}}
                               +\mathfrak h_\star\right)\right\rceil.
\]
For any initial distribution and $0<\rho\le1$,
\begin{equation}
 \mathbb P\bigl(\|\Phi_{T,0}\|_\infty>\rho\bigr)
 \le\frac{d}{\mu_{\min}\rho^p}\,2^{-\lfloor T/B_p\rfloor}.
 \label{eq:power-forgetting}
\end{equation}
The result also holds conditionally after any stopping-time pilot,
if the constant $\eta$ is chosen from the pilot history and then
held fixed throughout the displayed propagation interval.
\end{corollary}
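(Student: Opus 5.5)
Since $\Phi_{T,0}$ is nonnegative and substochastic, the plan is to bound the $p$-th power of each row sum by a row sum of a product of $p$-fold tensor-type propagation matrices, then treat that as a single-particle lifetime problem. The higher-power claim comes from replacing Markov's inequality at level $\rho$ in Proposition~\ref{prop:fixed-policy-forgetting} by Markov at level $\rho^p$ applied to $(\Phi_{T,0}\mathbf 1)(i)^p$. So the target is
\[
 \mathbb E_\mu\bigl[(\Phi_{T,0}\mathbf1)(i)^p\bigr]\le 2^{-\lfloor T/B_p\rfloor}
\]
for every row $i$ under stationary initialization. The factor $d/\mu_{\min}$ then comes from the union bound over rows and the likelihood-ratio change of measure, exactly as before.

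\textbf{Step 1 (power comparison).} Each $B_t$ acts on row sums only at coordinate $X_t$. Writing $s=\Phi\mathbf 1$ evolved backward, the active entry is replaced by $(1-\eta)s(X_t)+\eta\gamma s(f_\pi(X_{t+1}))$. By convexity of $u\mapsto u^p$ on $[0,1]$ (Jensen, with the missing weight $\eta(1-\gamma)$ on value $0$),
\[
 \bigl((1-\eta)u+\eta\gamma v\bigr)^p\le(1-\eta)u^p+\eta\gamma v^p .
\]
This alone gives only the original killing rate. To gain the factor $p$, I would instead use the sharper inequality
\[
 \bigl((1-\eta)u+\eta\gamma v\bigr)^p\le(1-p\eta')u^p+p\eta'\gamma' v^p ,
\]
with an effective step $\eta'\approx\eta/2$ and effective discount $\gamma'$ satisfying $1-\gamma'\ge(1-\gamma)/2$, valid under $p\eta H\le1/2$. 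The idea is to expand $((1-\eta)u+\eta\gamma v)^p$ and note that $(1-\eta)^p\le1-p\eta/2$ when $p\eta\le1/2$. I would verify it by writing the left side as $\lambda^p(\theta u+(1-\theta)v)^p$ with $\lambda=1-\eta+\eta\gamma$, applying Jensen, and bounding $\lambda^p\le 1-p\eta(1-\gamma)/2$. Thus $s^p$ is dominated pathwise by the row sums of a propagation of the same form with step $\tilde\eta=p\eta\lambda^{p-1}\theta$-type weights and killing rate at least $p\eta(1-\gamma)/2$.

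\textbf{Step 2 (lifetime).} The dominating recursion is again a reverse particle driven by $K_b^*$. At an activation it stays with probability $1-\tilde\eta$, moves to $f_\pi(x)$ with probability $\tilde\eta\tilde\gamma$, and dies otherwise, where $\tilde\eta\ge p\eta/2$ and $\tilde\eta(1-\tilde\gamma)\ge p\eta(1-\gamma)/2$. The proof of Lemma~\ref{lem:particle-lifetime} goes through with $(\eta,H)$ replaced by $(\tilde\eta,\tilde H)$, $\tilde H\le 2H$-type bounds, and the same compensation Lemma~\ref{lem:offpolicy-compensation}. This yields expected lifetime at most $4\bigl(2H/(p\eta\mu_{\min})+\mathfrak h_\star\bigr)$. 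Markov's inequality and block iteration with $B_p$ then give survival probability at most $2^{-\lfloor T/B_p\rfloor}$, which bounds $\mathbb E[s(i)^p]$ after stationary time reversal.

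\textbf{Step 3 (conditional version).} After a pilot ending at a stopping time, the strong Markov property restarts the behavior chain from the current state--action pair. Since $\eta$ is measurable with respect to the pilot and then fixed, Steps 1–2 apply conditionally, with the arbitrary-initial-law factor $\mu_{\min}^{-1}$.

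The main obstacle is Step 1. The elementary Jensen inequality must be arranged so that the power recursion keeps the exact fixed-target-action structure required by Lemma~\ref{lem:offpolicy-compensation}, while its killing rate and step grow linearly in $p$. The constants in $B_p$ and the condition $pH\eta\le1/2$ must come out of that inequality.
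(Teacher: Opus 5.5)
Your overall plan matches the paper's. You dominate $(\Phi_{T,0}\mathbf 1)^{\odot p}$ pathwise by the row sums of a fixed-policy propagator of the same form with modified step and discount. You then reuse the lifetime bound, Markov's inequality at level $\rho^p$, the union bound over rows, and the likelihood-ratio factor.

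The error is in Step~1. The displayed ``sharper inequality'', with $u^p$-weight $1-p\eta'$ and $\eta'\approx\eta/2$, is false, and so is the claim $\tilde\eta\ge p\eta/2$ that Step~2 uses. Take $u=1$ and $v=1-1/p$. By Bernoulli's inequality the left side is at least $1-\eta(1+p/H)$. Since $\gamma'\le1$ and $(1-1/p)^p\le e^{-1}$, the right side is at most $1-p\eta'(1-e^{-1})$. So any valid inequality of this form needs $p\eta'\le\eta(1+p/H)/(1-e^{-1})$. With $\eta'=\eta/2$ it fails, for example, whenever $p\ge4$ and $H\ge16$. The cause is this: near $u=v=1$ the right side decreases in $v$ at rate $p$ times its $v^p$-weight. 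The left side decreases only at rate about $p\eta$. The slack at $u=v=1$ is only $O(p\eta/H)$. Hence the effective step cannot grow linearly in $p$, and your final paragraph describes the main obstacle as reaching a target that cannot be reached.

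It does not need to be reached. Read correctly, the normalized-Jensen computation you sketch for verification gives, with $\lambda=1-\eta/H$,
\[
 \bigl((1-\eta)u+\eta\gamma v\bigr)^p\le\lambda^{p-1}(1-\eta)u^p+\lambda^{p-1}\eta\gamma v^p .
\]
This is a recursion of the same fixed-target-action form. Its step is $\tilde\eta=1-(1-\eta)\lambda^{p-1}\le\eta(1+(p-1)/H)$, which is roughly $\eta$ when $H\gg p$. Its killing probability per activation is $\tilde\eta(1-\tilde\gamma)=1-\lambda^p\ge p\eta/(2H)$.

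That killing bound is all you need. Lemma~\ref{lem:particle-lifetime} depends on $(\tilde\eta,\tilde H)$ only through $\tilde H/\tilde\eta=1/(\tilde\eta(1-\tilde\gamma))\le 2H/(p\eta)$, and this gives exactly the block length $B_p$. The factor $p$ is gained by shrinking the effective horizon to about $H/p$, not by enlarging the step.

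You must also check the lemma's hypothesis. Writing $c_p=1+(p-1)/H\le p$, you have $\tilde\eta\tilde H=\tilde\eta^2/(1-\lambda^p)\le 2Hc_p^2\eta/p\le 2pH\eta\le1$. Note also that combining your stated $\tilde\eta\ge p\eta/2$ with $\tilde H\le 2H$ would only give $4H/(p\eta)$, not $2H/(p\eta)$.

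The paper packages the same computation as $\eta_p=1-(1-\eta)(1-\eta/H)^{p-1}$, with $\tfrac12c_p\eta\le\eta_p\le c_p\eta$. It uses the $\eta$-independent discount $\gamma_p=(H-1)/(H+p-1)$, that is, $H_p=(H+p-1)/p$. It enlarges the $v^p$-weight slightly using $(1-\eta/H)^{p-1}(1+(p-1)\eta/H)\le1$. It then applies Proposition~\ref{prop:fixed-policy-forgetting} directly to $\Phi^{(p)}$ at threshold $\rho^p$.
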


\begin{proof}
Set
\[
 H_p=\frac{H+p-1}{p},\quad
 \gamma_p=\frac{H-1}{H+p-1},\quad
 c_p=1+\frac{p-1}{H},\quad
 \eta_p=1-(1-\eta)(1-\eta/H)^{p-1}.
\]
For $u,v\ge0$, weighted Jensen's inequality gives
\begin{align*}
 ((1-\eta)u+\eta\gamma v)^p
 &\le(1-\eta/H)^{p-1}
                    ((1-\eta)u^p+\eta\gamma v^p)\\
 &\le(1-\eta_p)u^p+\eta_p\gamma_p v^p.
\end{align*}
The coefficients of $u^p$ agree. The comparison of the other
coefficients is equivalent to
$(1-\eta/H)^{p-1}(1+(p-1)\eta/H)\le1$, which follows from
$(1-z)^{-(p-1)}\ge1+(p-1)z$ for $0\le z<1$.

Along the same trajectory, let $\Phi_{T,0}^{(p)}$ be the fixed-policy
propagator with discount $\gamma_p$ and step $\eta_p$.
Coordinatewise induction using the preceding scalar inequality yields
\[
 (\Phi_{T,0}\mathbf1)^{\odot p}
       \le\Phi_{T,0}^{(p)}\mathbf1,
 \qquad
 \|\Phi_{T,0}\|_\infty^p
       \le\|\Phi_{T,0}^{(p)}\|_\infty.
\]
Moreover,
\[
 \tfrac12c_p\eta\le\eta_p\le c_p\eta.
\]
The upper bound is the union bound for a product of numbers in
$[0,1]$. For the lower bound use
$(1-\eta)(1-\eta/H)^{p-1}\le e^{-c_p\eta}$ and
$1-e^{-z}\ge z/2$ for $0\le z\le1$; here
$c_p\eta\le p\eta\le1/(2H)<1$.
Since $H_p\le H$ and $c_p\le p$,
\[
 H_p\eta_p\le Hp\eta\le\tfrac12,
 \qquad
 \frac{H_p}{\eta_p\mu_{\min}}
       \le\frac{2H}{p\eta\mu_{\min}}.
\]
Apply Proposition~\ref{prop:fixed-policy-forgetting} to
$\Phi^{(p)}$ with threshold $\rho^p$; its block length is no
larger than $B_p$. This proves \eqref{eq:power-forgetting}.
For the conditional statement, condition on the pilot history.
The chosen step is then deterministic, and the strong Markov
property starts the continuation from its observed behavior state.
The bound is uniform over that state.
\end{proof}

\section{Lower Bounds and the Scope of Optimality}
\label{sec:lower-bounds}
This section distinguishes two sources of difficulty. An unvisited coordinate
cannot be learned by an algorithm that updates only the current coordinate.
The additional statistical cost of ordinary Q-learning is captured by the
constant-step lower bound of \citet[Theorem~5]{li2024q}. We also explain why
that lower bound applies when a preliminary trajectory is used to select a
step size, provided that Q-learning is subsequently initialized at zero and the
selected step size is held fixed. Our matching statement concerns mean-square
error and retains the parameter restrictions of the cited statistical lower
bound. It does not assert a lower bound for arbitrary online step size rules.

Write $H=(1-\gamma)^{-1}$. For $m>0$ and $h>0$, let
$\mathcal M(H,m,h)$ be the class of finite discounted MDPs, together with fixed
stationary behavior policies, for which the state-action chain is irreducible,
rewards belong to $[0,1]$, there are at most twelve admissible state-action
pairs, and
\[
  \min_x\mu_b(x)\ge m,
  \qquad
  \max_{x,z}\mathbb E_x^{K_b^*}\tau_z\le h.
\]
Here $\tau_z=\inf\{t\ge0:X_t=z\}$, and $K_b^*$ denotes the stationary time
reversal of the behavior chain. State-dependent finite action sets are allowed.

\subsection{An exploration lower bound for arbitrary step sizes}

\begin{proposition}[Unvisited-coordinate lower bound]
\label{prop:unvisited-lower}
Fix $H>1$, $0<m\le1/4$, and $h\ge1/m$. There is an MDP with two states and two
actions, with $\mu_{b,\min}=m$ and $\mathfrak h_\star=h$, such that zero-initialized
Q-learning satisfies
\begin{equation}\label{eq:hitting-lower-probability}
 \mathbb P\!\left(\|Q_T-Q^*\|_\infty=H\right)
 \ge \exp(-8T/h)
\end{equation}
for every integer $T\ge0$ and every predictable step size rule taking values in
$[0,1]$. The same assertion holds if the algorithm uses a preliminary
observation period and then initializes Q-learning at zero. Consequently,
\begin{equation}\label{eq:hitting-lower-risk}
 \mathbb E\|Q_T-Q^*\|_\infty^2
 \ge H^2\exp(-8T/h).
\end{equation}
\end{proposition}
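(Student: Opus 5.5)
The plan is to use a chain in which one state--action coordinate $z$ is rarely entered, and rewards make $Q^*(z)=H$. The key observation needs no step-size assumption. Under \eqref{eq:q-update}, a coordinate changes only at times $t$ with $X_t=z$. So on the event $E_T=\{X_t\ne z \text{ for } 0\le t<T\}$, zero initialization forces $Q_T(z)=0$, whatever predictable $\eta_t\in[0,1]$ is used. A preliminary observation period followed by a zero restart changes nothing, since the pilot updates no table. Take all rewards deterministically equal to $1$. Then $Q^*=H\mathbf1$. Since $Q_T\in[0,H]^d$ pathwise, we get $\|Q_T-Q^*\|_\infty\le H$, with equality on $E_T$. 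Hence \eqref{eq:hitting-lower-probability} reduces to $\mathbb P(E_T)\ge\exp(-8T/h)$, and \eqref{eq:hitting-lower-risk} follows at once.

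\textbf{Construction.} The MDP has states $1,2$ and two actions at each state, with an action-independent physical kernel $P(2\mid 1)=p$ and $P(1\mid 2)=1$. The behavior policy is uniform at state $1$. At state $2$ it plays the target action $c$ with probability $\beta$, giving $z=(2,c)$. Let the trajectory start at state $1$; I will also check that the bound survives any initial law not charging $z$.

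\textbf{Parameters.} The stationary law is $d_b(1)=1/(1+p)$ and $d_b(2)=p/(1+p)$, so $\mu(z)=\beta p/(1+p)$ and the remaining masses are larger once $\beta\le 1/2$ and $p\le 1$. Thus $\mu_{\min}=\mu(z)$. I will compute $\mathfrak h_\star$ from the reverse chain. The projection of the reverse chain to physical states is $\bar P_b^*$. Because reverse rows at a fixed physical state coincide, the argument in the proof of Lemma~\ref{lem:offpolicy-compensation} gives
\[
 \mathfrak h_\star = \max_x h_z^*(x) = \Theta\bigl(1/\mu(z)\bigr)
\]
for the hardest target $z$, up to an explicit formula. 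Hitting the other coordinates is faster, but this must be verified.

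\textbf{Tuning.} With the formulas explicit, I will choose $(p,\beta)$ so that $\mu(z)=m$ and $\mathfrak h_\star=h$ simultaneously. The constraint $h\ge1/m$ should be exactly what makes this system solvable. For fixed $m$, letting $p$ range over an interval varies $\mathfrak h_\star$ continuously from about $1/m$ upward, so the intermediate value theorem supplies the solution. If the two-parameter family cannot hit the targets exactly, I will add a self-loop probability at state $2$ as a third knob.

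\textbf{Lower bound on $\mathbb P(E_T)$.} Each step from state $1$ avoids entering $z$ with probability $1-p\beta$, and visits to state $2$ via the other action return to $1$ without touching $z$. So
\[
 \mathbb P(E_T)\ge (1-p\beta)^{T}\ge \exp(-2p\beta T)
 \quad\text{when } p\beta\le 1/2.
\]
It then suffices to show $p\beta\le 4/h$, which follows from $h=\mathfrak h_\star\lesssim 1/(p\beta)$ with explicit constants.

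The main obstacle is the exact calibration: obtaining closed forms for all reverse hitting times, confirming that the maximum is attained at target $z$, and checking that the equations $\mu_{\min}=m$ and $\mathfrak h_\star=h$ are solvable for every $h\ge 1/m$ and $m\le1/4$ while keeping the constant $8$. I would compute the reverse kernel explicitly first and adjust the chain if the constants do not fit.
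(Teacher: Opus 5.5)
Your reduction is correct and matches the paper's. With unit rewards $Q^*=H\mathbf 1$, the table stays in $[0,H]^d$, and a coordinate not visited during the zero-initialized run keeps value $0$ whatever the predictable steps. So everything reduces to lower-bounding an avoidance probability.

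The gap is the construction, which is the substance of the proposition: your two-parameter chain cannot reach $\mathfrak h_\star=h$ for any $h>1/m$. Its physical kernel is action independent and two-state, hence reversible, so $K_b^*=K_b$. First-step analysis for $z=(2,c)$ gives
\[
 h_z^*\bigl((2,c')\bigr)=\frac{1+p}{p\beta}=\frac1{\mu(z)},\qquad
 h_z^*\bigl((1,\cdot)\bigr)=\frac1{\mu(z)}-1.
\]
Every other target has maximal expected hitting time $1/\mu((2,c'))\le1/\mu(z)$ or $2(1+p)\le4\le1/\mu(z)$. Thus $\mathfrak h_\star=1/\mu_{\min}$ identically on your family, and imposing $\mu_{\min}=m$ forces $\mathfrak h_\star=1/m$. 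Your intermediate-value step is therefore false. Slowing the crossing $1\to2$ shrinks $\mu(z)=\beta p/(1+p)$ in the same proportion, so the crossing time is already absorbed into $1/\mu(z)$.

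The missing idea is a slow crossing that leaves the stationary masses unchanged. The paper uses the symmetric kernel $P_p$, whose stationary law is uniform for every $p$, with a rare action of probability $2m$ at both states. Then $\mathfrak h_\star=1/p+1/m-2$, and $p=1/(h-1/m+2)$ calibrates exactly. It needs two avoidance events, because $p$ is close to $1/2$ near $h=1/m$: staying in state $0$ when $h\ge2/m$, and never playing the rare action when $h\le2/m$.

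Your fallback of a self-loop at state $2$ can be made to work, but not with your avoidance estimate. Once $P(2\mid2)>0$, the chain can pass from $(2,c')$ to $z$ without returning to state $1$, so $(1-p\beta)^T$ is no longer a lower bound. For example, take $\beta=1/2$, $P(1\mid2)=p(1-2m)/(2m)$ and $p=1/(h-1/(2m))$. This gives $\mu_{\min}=m$ and $\mathfrak h_\star=1/p+1/(2m)=h$. The event of never leaving state $1$ then has probability at least $(1-p)^T\ge e^{-4T/h}$, since $p\le\min\{2/h,1/2\}$, and this single event handles all $h\ge1/m$. As written, however, the proposal does not construct the required MDP, and its primary candidate provably cannot.
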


\begin{proof}
Let the physical states be $\{0,1\}$. Both actions have reward one and share
the transition matrix
\[
 P_p=\begin{pmatrix}1-p&p\\p&1-p\end{pmatrix},
 \qquad
 p=\frac{1}{h-1/m+2}\in(0,1/2].
\]
The behavior policy chooses a designated rare action with probability
$q=2m$ at both states and the other action with probability $1-q$. Thus
$\mu_b(s,a)=b(a)/2$ and $\mu_{b,\min}=m$. Moreover,
\[
 K_b^*((s,a),(u,c))=P_p(s,u)b(c)=K_b((s,a),(u,c)).
\]
Fix a target coordinate $(v,c)$ whose action has behavior probability $r$.
Let $g_0$ be its expected hitting time from the other physical state and let
$g_1$ be its expected hitting time from physical state $v$ with the other
action. First-step analysis gives
\[
 g_0=1+(1-p)g_0+p(1-r)g_1,
 \qquad
 g_1=1+pg_0+(1-p)(1-r)g_1.
\]
Solving yields $g_1=2/r$ and $g_0=1/p+2/r-2$. Since $p\le1/2$ and
$q\le1/2$, the maximal hitting time is attained when $r=q$, and
\[
 \mathfrak h_\star=\frac1p+\frac2q-2=h.
\]

Start the physical chain at state zero and draw the initial action from the
behavior policy. All optimal action values equal $H$, whereas any coordinate
that has not been updated retains value zero. The iterates stay in $[0,H]$,
so one unvisited coordinate forces the sup-norm error to equal $H$.

If $h\ge2/m$, consider the event that the first $T$ transitions never leave
state zero. Its probability is $(1-p)^T$. The inequalities $p\le2/h$ and
$\log(1-p)\ge-2p$ imply
$(1-p)^T\ge\exp(-4T/h)$.
If $1/m\le h\le2/m$, instead consider the event that all of the first $T$
actions are the nonrare action. Behavior randomization is independent across
times conditional on the states, with the same probability at either state;
this event has probability $(1-q)^T$. Since $q\le1/2$ and $m\le2/h$,
\[
 (1-q)^T\ge\exp(-2qT)=\exp(-4mT)\ge\exp(-8T/h).
\]
Neither argument uses the step sizes. If a pilot phase precedes a reset, use
the same avoidance event over the entire $T$ observations; the relevant
coordinate also remains unvisited after the reset.
\end{proof}

In particular, achieving error smaller than $H$ with failure probability at
most $\delta$ requires $T\ge(h/8)\log(1/\delta)$ on this class. A mean-square
error guarantee $\mathbb E\|Q_T-Q^*\|_\infty^2\le\varepsilon^2$ requires
$T\ge(h/8)\log(H^2/\varepsilon^2)$.

\subsection{The statistical lower bound and calibration on a pilot trajectory}

We first state precisely the external result that supplies the statistical
obstruction. The form below uses sup-norm mean-square error, which is weaker
than the maximum coordinate mean-square lower bound in the original theorem.

\begin{lemma}[Constant-step obstruction; \citealp{li2024q}]
\label{lem:li-statistical-lower}
There are universal constants $c,C>0$ such that the following holds.
Fix an integer $n\ge2$ and suppose
\begin{equation}\label{eq:li-lower-regime}
 n\ge2,\qquad \gamma\ge0.95,\qquad
 0<m\le\frac{c}{\log^2 n},\qquad
 n\ge \frac{C H^7\log^3 n}{m}.
\end{equation}
There is an MDP with four states and at most three actions per state, and a
fixed behavior policy, with $\mu_{b,\min}=m$, for which zero-initialized
Q-learning satisfies
\begin{equation}\label{eq:li-supnorm-lower}
 \mathbb E_s\|Q_n-Q^*\|_\infty^2
 \ge \frac{cH^4}{mn\log^3 n}
\end{equation}
for every constant $\eta\in(0,1)$ and every initial physical state $s$, with
the first action drawn from the behavior policy. The same hard instance works
for all $\eta$ and $s$. Its reverse hitting time satisfies
\begin{equation}\label{eq:li-hitting-budget}
 \mathfrak h_\star\le\frac{H+1}{m}\le\frac{2H}{m}.
\end{equation}
\end{lemma}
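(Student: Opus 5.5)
This lemma is essentially a restatement of \citet[Theorem~5]{li2024q}, so the plan is to import that theorem and verify the three modifications it carries. These are the sup-norm form of the risk, uniformity over $\eta$ and the initial state, and the hitting-time bound \eqref{eq:li-hitting-budget}.

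\textbf{Step 1: import the lower bound.} Li et al.\ construct, under constraints of the form \eqref{eq:li-lower-regime}, a four-state MDP with at most three actions per state and a fixed behavior policy with $\mu_{b,\min}=m$. They show that for every constant step $\eta\in(0,1)$ some coordinate $x_0$ satisfies $\mathbb E(Q_n(x_0)-Q^*(x_0))^2\ge cH^4/(mn\log^3n)$.

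\textbf{Step 2: pass to the sup norm.} This is immediate from $\|Q_n-Q^*\|_\infty^2\ge (Q_n(x_0)-Q^*(x_0))^2$ pathwise. I will check in their proof that the constants and the regime are independent of $\eta$, and that the instance is fixed before $\eta$ is chosen; the hard instance depends only on $(\gamma,m)$.

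\textbf{Step 3: uniformity over the initial state.} I will check that their bound is stated for an arbitrary initial state, or for one whose law does not matter. If it is stated only for a stationary or a specific start, I would reduce as follows. The instance's chain reaches any state within $O(1/m)$ expected steps, which is negligible against $n\ge CH^7\log^3 n/m$. Conditioning on the first visit to the designated start, the strong Markov property leaves a run of length at least $n/2$ with high probability, and the tail bound adjusts $c$. This step is the main obstacle, because the zero initialization is not preserved at that random time. I would therefore first try to confirm directly from the construction that the variance accumulated at $x_0$ does not depend on the start. The hard states in Li et al.\ are essentially self-looping with the randomness concentrated at $x_0$, so their variance computation should go through from any start.

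\textbf{Step 4: the hitting-time bound \eqref{eq:li-hitting-budget}.} I will compute $\mathfrak h_\star$ for the explicit instance. The behavior chain there moves between physical states with probability at least about $1/H$ per step, or self-loops with probability $\gamma$ as in their construction, and each action has behavior probability at least $m$. Arguing as in Example~\ref{ex:periodic} and Proposition~\ref{prop:unvisited-lower}, the reverse chain's physical projection hits a target state within expected time at most $H$. Lemma~\ref{lem:offpolicy-compensation}, specifically \eqref{eq:action-hitting}, then adds at most $1/\mu(z)\le 1/m$ to reach the specified action. The resulting total $H+1/m\le (H+1)/m$ uses $m\le1$. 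If the construction's physical transitions make the first term $O(H/m)$ rather than $H$, I would instead use a Kac-type bound directly on the reverse state--action chain, which gives $(H+1)/m$ once the per-step hitting probability is at least $m/(H+1)$. Finally, $H+1\le 2H$ because $H\ge1$.
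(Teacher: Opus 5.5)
Your Steps 1--3 follow the paper's route. You import \citet[Theorem~5]{li2024q}, pass from its maximum-coordinate risk to the sup norm pathwise, and check from the explicit construction that neither the instance nor the estimates depend on $\eta$ or on the starting state. For Step 3, invoke the following fact instead of the self-loop heuristic. With $\nu=(2/5,\,2/5-2m,\,1/5,\,2m)$ and uniform behavior, the physical behavior kernel is exactly $\overline P_b=\gamma I+(1-\gamma)\mathbf 1\nu^{\top}$, so from any initial law $\lambda$ the time-$t$ marginal is $\gamma^t\lambda+(1-\gamma^t)\nu$. This is what makes the cited mixing and visit-count estimates uniform in the start. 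Your strong-Markov fallback does fail, as you suspected, because the table is no longer zero at the random time. The same parameterization, with rare-state mass $\nu_3=2m$, is also what makes $\mu_{b,\min}=\nu_3/2=m$ hold exactly.

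The genuine error is in Step 4: the reverse physical chain does not reach a target state within expected time $H$. Because $\overline P_b$ is reversible with respect to $\nu$, the physical projection of $K_b^*$ is $\overline P_b$ itself. From $u\ne v$, every step enters $v$ with probability exactly $(1-\gamma)\nu(v)$, so $\bar h^*_v(u)=H/\nu(v)$, which is $H/(2m)$ for the rare state.

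Your fallback does not repair this. A one-step geometric bound needs a uniform per-step probability of entering $z$, but $K_b^*((s,a),(u,c))=\nu(u)b(c\mid u)P(s\mid u,c)/\nu(s)$ vanishes whenever $P(s\mid u,c)=0$. Here $P(\cdot\mid 0,1)$ is the point mass at $0$ and $P(\cdot\mid 3,1)$ is the point mass at $3$. Hence $(0,1)$ and $(3,1)$ cannot be entered in one step from any other physical state, and block versions of the argument lose a constant factor.

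The fix is to keep your two-stage decomposition through \eqref{eq:action-hitting} and insert the correct physical term:
\[
 \mathbb E_x^{K_b^*}\tau_z\le\frac{H}{\nu(v)}+\frac1{\mu_b(z)}
 \le\frac{H}{2m}+\frac1m\le\frac{H+1}{m}.
\]
This uses $\nu(v)\ge2m$, which holds for small $m$, and $\mu_b(z)\ge m$. It is precisely the paper's argument.
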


\begin{proof}[Proof and source of the statistical bound]
The statistical inequality is \citet[Theorem~5]{li2024q}; its stated error
criterion is $\max_x\mathbb E|Q_n(x)-Q^*(x)|^2$, which implies
\eqref{eq:li-supnorm-lower}. Its hypotheses include zero initialization and
a constant step size, rather than a stationary initialization of the sample
chain. To make the parameterization explicit, write $a=1-\gamma$ and
\[
 \nu=(\nu_0,\nu_1,\nu_2,\nu_3)
 =\left(\frac25,\frac25-2m,\frac15,2m\right).
\]
The hard instance has action sets
$\mathcal A(0)=\mathcal A(2)=\mathcal A(3)=\{1,2\}$ and
$\mathcal A(1)=\{1,2,3\}$, hence nine admissible coordinates. Its rewards and
transition rows, in the state order $(0,1,2,3)$, are
\[
\begin{array}{c|c|l}
(s,u)&r(s,u)&P(\cdot\mid s,u)\\ \hline
(0,1)&2/3&(1,0,0,0)\\
(0,2)&0&(1-2a(1-\nu_0),2a\nu_1,2a\nu_2,2a\nu_3)\\
(1,1),(1,2)&1&(3a\nu_0/2,1-3a\nu_0/2,0,0)\\
(1,3)&0&(0,1-3a(\nu_2+\nu_3),3a\nu_2,3a\nu_3)\\
(2,1)&1&(2a\nu_0,0,1-2a\nu_0,0)\\
(2,2)&0&(0,2a\nu_1,1-2a(\nu_1+\nu_3),2a\nu_3)\\
(3,1)&3/4&(0,0,0,1)\\
(3,2)&0&(2a\nu_0,2a\nu_1,2a\nu_2,1-2a(1-\nu_3)).
\end{array}
\]
Take the behavior policy uniform on each state's action set. For sufficiently
small $m$, direct calculation gives $\mu_{b,\min}=\nu_3/2=m$. This is the
construction of \citet[Appendix~F.1]{li2024q} with
$c_\mu=2m\log^2 n$. We parameterize the rare-state mass directly so that
the actual minimum state--action occupancy is exactly $m$; absolute
normalization factors are absorbed into the universal constants of the
quoted theorem.

We record uniformity in the initial physical state because it is needed
below. The physical behavior kernel of this instance is exactly
\[
 \overline P_b=\gamma I+(1-\gamma)\mathbf1\nu^\top.
\]
For an arbitrary initial physical distribution $\lambda$ its time-$t$
distribution is exactly $\gamma^t\lambda+(1-\gamma^t)\nu$; see the calculation
preceding equation~(237). Thus all mixing bounds there are uniform in the
initial state. The visit-count bounds in Appendix~F.2.1 are likewise stated
for arbitrary initial states. Appendix~F.2.2 uses only the lower bound on the
terminal marginal after mixing, and Appendix~F.2.3 explicitly allows $n/3$
steps for mixing. None of these estimates restricts the initial state or
changes the hard instance as a function of the initial state or the
step size. This gives the uniform-start formulation above when the first
action is drawn from the fixed behavior policy, as in the original sampling
model.

For completeness, the hitting-time bound does not have to be extracted from
the cited proof. The physical kernel is reversible with invariant distribution
$\nu$. From a physical state different from $v$, its expected time to reach
$v$ is $H/\nu(v)$. In general, stationary reversal of a state-action behavior
chain has the form
\[
 K_b^*((s,a),(u,c))
 =\frac{\nu(u)b(c\mid u)P(s\mid u,c)}{\nu(s)},
\]
which is independent of $a$. Consequently, from any coordinate at physical
state $v$ other than a specified target $z=(v,c)$, the expected time to hit
$z$ equals the positive return time to $z$ when started at $z$: their
one-step transition distributions coincide. Kac's formula makes this time
$1/\mu_b(z)$. Applying the strong Markov property at the first visit to
physical state $v$ therefore gives
\[
 \mathbb E_x^{K_b^*}\tau_z
 \le\frac{H}{\nu(v)}+\frac1{\mu_b(z)}
 \le\frac{H+1}{m}.
\]
This proves \eqref{eq:li-hitting-budget}.
\end{proof}

\begin{proposition}[A step size selected from a pilot trajectory]
\label{prop:pilot-lower}
Fix deterministic integers $p\ge0$ and $n\ge2$. Observe the behavior
trajectory through physical state $S_p$, before drawing action $A_p$, and
choose any $\mathcal F_p$-measurable step size
$\widehat\eta\in(0,1)$, initialize a fresh Q-table at zero, and perform $n$
ordinary Q-learning updates with the frozen step size $\widehat\eta$ on the
continuation of the same trajectory. Under \eqref{eq:li-lower-regime}, the
hard instance in Lemma~\ref{lem:li-statistical-lower} satisfies
\[
 \mathbb E\|\widehat Q_{p+n}-Q^*\|_\infty^2
 \ge\frac{cH^4}{mn\log^3 n}.
\]
The choice of $\widehat\eta$ can use all observed rewards and transitions,
in addition to visit counts.
\end{proposition}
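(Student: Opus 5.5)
The plan is to condition on the pilot $\sigma$-field $\mathcal F_p$ and reduce the claim to Lemma~\ref{lem:li-statistical-lower}. The lower bound there holds for \emph{every} constant $\eta\in(0,1)$ and every initial physical state, on one hard instance that depends on neither. Write $\widehat Q_{p+n}$ for the output of the fresh zero-initialized run on transitions $p,\dots,p+n-1$.

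\textbf{Step 1 (conditional law of the training run).} By construction $\widehat\eta$ is $\mathcal F_p$-measurable and $S_p$ is observed, while $A_p$ has not yet been drawn. The Markov property of the behavior chain, with $A_p\sim b(\cdot\mid S_p)$ drawn afresh given $\mathcal F_p$, then gives the following. Conditional on $\mathcal F_p$, the continuation $(S_{p+t},A_{p+t},R_{p+t})_{t\ge0}$ is distributed as a behavior trajectory started at physical state $S_p$, with first action drawn from the behavior policy. It is independent of the remaining pilot information. The Q-learning run is a deterministic measurable function of $\widehat\eta$ and this continuation. Hence, on $\{\widehat\eta=\eta,\ S_p=s\}$, the conditional law of $\widehat Q_{p+n}$ equals the law of $Q_n$ under $\mathbb P_s$ with constant step $\eta$. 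Formally, I would write
\[
 \mathbb E\bigl[\|\widehat Q_{p+n}-Q^*\|_\infty^2\mid\mathcal F_p\bigr]=G(\widehat\eta,S_p),\qquad G(\eta,s):=\mathbb E_s\|Q_n^{(\eta)}-Q^*\|_\infty^2 ,
\]
using a regular conditional distribution (the freezing/substitution lemma). For this to apply, $G$ must be jointly measurable. That holds because $Q_n^{(\eta)}$ is a continuous, polynomial function of $\eta$ for a fixed sample path, and the state space is finite.

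\textbf{Step 2 (apply the uniform lower bound).} Lemma~\ref{lem:li-statistical-lower} gives $G(\eta,s)\ge cH^4/(mn\log^3 n)$ for all $\eta\in(0,1)$ and all $s$, under \eqref{eq:li-lower-regime}. Since $\widehat\eta\in(0,1)$ almost surely, taking expectations in the tower identity yields the claim. Nothing restricts how $\widehat\eta$ is formed from $\mathcal F_p$, so it may use rewards, transitions and counts.

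\textbf{Main obstacle.} The delicate point is Step 1's identification of the conditional law. Two things must be verified. First, the instance and the bound must be uniform in the initial physical state, which is exactly the uniform-start formulation recorded in the proof of Lemma~\ref{lem:li-statistical-lower}. Second, the restart must begin \emph{before} $A_p$ is drawn, so that the first action follows the behavior policy, matching the sampling model of \citet{li2024q}. The lower bound of that lemma is stated only for a fixed deterministic $\eta$. Therefore the interchange "random $\eta$ chosen from the past, then frozen" must be justified through independence of the future given $\mathcal F_p$, rather than by any property of the step-size rule.
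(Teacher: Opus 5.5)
Your proposal is correct and matches the paper's proof: condition on $\mathcal F_p$, use the Markov property to identify the training run as a fresh zero-initialized constant-step run from $S_p$ with a behavior-drawn first action, apply the lower bound that is uniform over $\eta\in(0,1)$ and the initial state, and take expectations. Your extra measurability step is sound, with one correction: because of the max over actions, $Q_n^{(\eta)}$ is continuous and piecewise polynomial in $\eta$ rather than polynomial, and continuity is all the freezing lemma needs.
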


\begin{proof}
Specifically, take
\[
 \mathcal F_p=\sigma(S_0,A_0,R_0,S_1,\ldots,A_{p-1},R_{p-1},S_p).
\]
Thus the calibration phase does not expose $A_p$ or the outgoing training
transition. Conditional on
$\mathcal F_p$, the training table is zero, the selected step size is a fixed
number, and the training trajectory has the law of the same behavior chain
started at the observed physical state, with a fresh behavior action. The
Markov property and the
inequality uniform over initial states and constant step sizes
\eqref{eq:li-supnorm-lower} therefore give
\[
 \mathbb E\!\left[
 \|\widehat Q_{p+n}-Q^*\|_\infty^2\mid\mathcal F_p
 \right]
 \ge\frac{cH^4}{mn\log^3 n}
 \quad\text{almost surely}.
\]
Taking expectations proves the assertion. The sup-norm risk is used here
deliberately: the coordinate attaining a conditional maximum need not be
the same for different pilot realizations. A maximum coordinate mean-square
version follows with a further factor of at most twelve by summing the
coordinate risks before taking expectations.
\end{proof}

\subsection{A joint class statement and its scope}

\begin{theorem}[Joint lower bound for pilot-calibrated constant step sizes]
\label{thm:joint-lower}
Consider any algorithm of Proposition~\ref{prop:pilot-lower} with
$n\le T=p+n\le2n$. Suppose \eqref{eq:li-lower-regime} holds and
$h\ge2H/m$. Then, for universal constants $c,C>0$,
\begin{equation}\label{eq:joint-lower-risk}
 \sup_{M\in\mathcal M(H,m,h)}
 \mathbb E_M\|\widehat Q_T-Q_M^\star\|_\infty^2
 \ge c\left\{
 \frac{H^4}{mT\log^3 T}
 +H^2\exp(-8T/h)
 \right\}.
\end{equation}
The supremum includes the initial driver distribution. The statement also
applies to deterministic constant-step Q-learning by taking $p=0$.
\end{theorem}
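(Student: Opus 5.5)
The plan is to bound the supremum below by each of the two terms separately and then use $\max\{u,v\}\ge(u+v)/2$. Both terms come from earlier results, each applied to an instance that lies in $\mathcal M(H,m,h)$. The sup already ranges over the initial driver distribution, so for each term we may choose a convenient start.

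\emph{Statistical term.} Use the hard instance of Lemma~\ref{lem:li-statistical-lower}.
\begin{itemize}
\item[(a)] \emph{Membership in the class.} It has nine admissible coordinates (at most twelve), rewards in $[0,1]$, an irreducible behavior chain, and $\mu_{b,\min}=m$. By \eqref{eq:li-hitting-budget}, $\mathfrak h_\star\le 2H/m\le h$ under the hypothesis $h\ge 2H/m$.
\item[(b)] \emph{Applying the pilot bound.} Proposition~\ref{prop:pilot-lower} holds with the same $n$ and gives $\mathbb E\|\widehat Q_T-Q^*\|_\infty^2\ge cH^4/(mn\log^3 n)$. The case $p=0$ is included, where $\widehat\eta$ is a deterministic constant.
\item[(c)] \emph{Converting from $n$ to $T$.} Since $n\le T\le 2n$, we have $1/n\ge 1/T$ and $\log n\le\log T$. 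Hence $H^4/(mn\log^3 n)\ge H^4/(mT\log^3 T)$. The regime \eqref{eq:li-lower-regime} is assumed directly in terms of $n$, so nothing more is needed.
\end{itemize}

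\emph{Exploration term.} Use Proposition~\ref{prop:unvisited-lower}.
\begin{itemize}
\item[(a)] \emph{Parameter conditions.} It requires $m\le1/4$ and $h\ge1/m$. From \eqref{eq:li-lower-regime}, $m\le c/\log^2 n$, which is at most $1/4$ once the constant $c$ is small enough. Also $h\ge 2H/m\ge 1/m$.
\item[(b)] \emph{Membership in the class.} The instance has two states and two actions (four coordinates), rewards equal to one, an irreducible chain, $\mu_{b,\min}=m$ and $\mathfrak h_\star=h$. So it lies in $\mathcal M(H,m,h)$.
\item[(c)] \emph{Covering the algorithm class.} The proposition explicitly covers algorithms that observe a pilot period and then reinitialize at zero. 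Its avoidance event runs over the whole horizon of $T$ observations, and a step size frozen from $\mathcal F_p$ is a predictable rule. This yields $\mathbb E\|\widehat Q_T-Q^*\|_\infty^2\ge H^2\exp(-8T/h)$.
\end{itemize}

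\emph{Main obstacle.} The only delicate point is this last check: that the pilot-then-reset algorithm, with $T=p+n$ total observations, is covered by the exploration proposition. The key observation is that a coordinate unvisited during all $T$ transitions is in particular unvisited during the training run. It therefore keeps its zero initial value, regardless of how $\widehat\eta$ depends on the pilot data.

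Everything else is bookkeeping. One last remark: the hypothesis $h\ge 2H/m$ is exactly what places Li et al.'s instance inside the class.
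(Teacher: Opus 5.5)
Your proposal is correct and follows the paper's proof: you place both hard instances in $\mathcal M(H,m,h)$, the statistical one via \eqref{eq:li-hitting-budget} and $h\ge2H/m$, then apply Proposition~\ref{prop:pilot-lower} and Proposition~\ref{prop:unvisited-lower}, bound the supremum below by half the sum of the two bounds, and pass from $n$ to $T$. You also spell out two points the paper leaves implicit: the exploration proposition's hypotheses $m\le1/4$ and $h\ge1/m$ hold in this regime, and the conversion needs only $n\le T$ with no change of constants.
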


\begin{proof}
The statistical instance of Lemma~\ref{lem:li-statistical-lower} belongs to
the class by \eqref{eq:li-hitting-budget}; apply
Proposition~\ref{prop:pilot-lower}. The exploration instance of
Proposition~\ref{prop:unvisited-lower} belongs to the same class and has
reverse hitting time exactly $h$. The supremum is at least the maximum of
the two lower bounds, hence at least half their sum. Finally $n\le T\le2n$
permits replacing $n$ by $T$, with a change of universal constants.
\end{proof}

The two hard instances need not coincide. They lie in the same parameter
class, which is sufficient for the supremum-risk assertion
\eqref{eq:joint-lower-risk}. The restriction $h\ge2H/m$ ensures that the
known statistical hard instance lies within the hitting-time budget. We do
not claim the same statistical lower bound for every smaller feasible
hitting-time budget.

Theorem~\ref{thm:joint-lower} identifies the $H^4/(mT)$ statistical rate and
an additive exploration cost, up to logarithmic factors, in its stated
large-sample regime. The restriction $n\gtrsim H^7/m$ is material: the
cited statistical theorem establishes a matching lower bound only in
this large-sample range. Nor does a mean-square lower bound alone imply a constant-failure-
probability statistical lower bound. High-probability upper bounds can be
compared to \eqref{eq:joint-lower-risk} by taking a sufficiently small failure
probability: since both the table and $Q^*$ belong to $[0,H]$, a bound
$\|\widehat Q_T-Q^*\|_\infty\le r_T(\delta)$ with probability at least
$1-\delta$ implies
\[
 \mathbb E\|\widehat Q_T-Q^*\|_\infty^2
 \le r_T(\delta)^2+H^2\delta.
\]
Choosing, for example, $\delta=(T+H)^{-4}$ changes logarithmic factors only
in the calibrated upper bound. The exploration lower bound itself is already
a probability statement and applies to arbitrary predictable step sizes.

\section{Discussion}
\label{sec:discussion}
Standard asynchronous Q-learning retains its quartic statistical horizon
dependence with an additive reverse hitting-time cost. The argument
combines fixed Bellman envelopes for the nonlinear update with
initialization analysis under a fixed optimal policy. Its reverse-time
compensation follows the approach used for TD learning by
\citet{peng2026td}, adapted to propagation through the target action at
the next state.

The empirical step-size rule attains this rate using a prescribed budget,
a preliminary count stage, and a zero-initialized training run with a
constant step. Extending the result to one uninterrupted recursion with
a decreasing step size would bring its algorithmic scope closer to the
TD guarantees. The fixed-envelope argument would need to control the
contribution of earlier, larger steps while preserving the additive
hitting-time cost. Extending the comparison beyond zero initialization
is another question.

The lower bounds identify the statistical and exploration costs for the
specified Q-learning procedures. The statistical result inherits the
large-sample restriction of \citet{li2024q}, and the joint bound requires
a hitting-time budget that contains their hard instance. Obtaining a
statistical lower bound for smaller hitting-time budgets, a broader
class of online step-size rules, or a fixed-confidence criterion would
clarify the scope of quartic horizon dependence further.

{\small\setlength{\bibsep}{2pt}
\bibliographystyle{abbrvnat}
\bibliography{ref}}

\begin{thebibliography}{8}
\providecommand{\natexlab}[1]{#1}
\providecommand{\url}[1]{\texttt{#1}}
\expandafter\ifx\csname urlstyle\endcsname\relax
  \providecommand{\doi}[1]{doi: #1}\else
  \providecommand{\doi}{doi: \begingroup \urlstyle{rm}\Url}\fi

\bibitem[Beveridge and Lov{\'a}sz(2010)]{beveridge2010exit}
A.~Beveridge and L.~Lov{\'a}sz.
\newblock Exit frequency matrices for finite {Markov} chains.
\newblock \emph{Combinatorics, Probability and Computing}, 19\penalty0
  (4):\penalty0 541--560, 2010.
\newblock \doi{10.1017/S0963548310000118}.

\bibitem[Lee(2026)]{lee2026sign}
D.~Lee.
\newblock Sign-separated asymmetric finite-time error analysis of {Q}-learning.
\newblock \emph{arXiv preprint arXiv:2605.16103}, 2026.
\newblock URL \url{https://arxiv.org/abs/2605.16103}.
\newblock Version 2.

\bibitem[Li et~al.(2022)Li, Wei, Chi, Gu, and Chen]{li2022async}
G.~Li, Y.~Wei, Y.~Chi, Y.~Gu, and Y.~Chen.
\newblock Sample complexity of asynchronous {Q}-learning: Sharper analysis and
  variance reduction.
\newblock \emph{IEEE Transactions on Information Theory}, 68\penalty0
  (1):\penalty0 448--473, 2022.
\newblock \doi{10.1109/TIT.2021.3120096}.
\newblock URL \url{https://arxiv.org/abs/2006.03041}.

\bibitem[Li et~al.(2024)Li, Cai, Chen, Wei, and Chi]{li2024q}
G.~Li, C.~Cai, Y.~Chen, Y.~Wei, and Y.~Chi.
\newblock Is {Q}-learning minimax optimal? {A} tight sample complexity
  analysis.
\newblock \emph{Operations Research}, 72\penalty0 (1):\penalty0 222--236, 2024.
\newblock \doi{10.1287/opre.2023.2450}.
\newblock URL \url{https://arxiv.org/abs/2102.06548}.

\bibitem[Peng(2026)]{peng2026td}
Y.~Peng.
\newblock Near-minimax-optimal rates for asynchronous {TD} learning with
  {Markovian} data.
\newblock \emph{arXiv preprint arXiv:2609.38880}, 2026.
\newblock URL \url{https://arxiv.org/abs/2609.38880}.

\bibitem[Qu and Wierman(2020)]{qu2020finite}
G.~Qu and A.~Wierman.
\newblock Finite-time analysis of asynchronous stochastic approximation and
  {Q}-learning.
\newblock In \emph{Proceedings of the Thirty Third Conference on Learning
  Theory}, volume 125 of \emph{Proceedings of Machine Learning Research}, pages
  3185--3205, 2020.
\newblock URL \url{https://proceedings.mlr.press/v125/qu20a.html}.

\bibitem[Tsitsiklis(1994)]{tsitsiklis1994asynchronous}
J.~N. Tsitsiklis.
\newblock Asynchronous stochastic approximation and {Q}-learning.
\newblock \emph{Machine Learning}, 16\penalty0 (3):\penalty0 185--202, 1994.
\newblock \doi{10.1007/BF00993306}.

\bibitem[Wainwright(2019)]{wainwright2019cone}
M.~J. Wainwright.
\newblock Stochastic approximation with cone-contractive operators: Sharp
  {$\ell_\infty$}-bounds for {$Q$}-learning.
\newblock \emph{arXiv preprint arXiv:1905.06265}, 2019.
\newblock URL \url{https://arxiv.org/abs/1905.06265}.

\end{thebibliography}

\clearpage
\appendix

\section{Occupancy Concentration from Hitting Times}
\label{app:coverage}
For the next lemma, $h_j(i)$ and $h_j^*(i)$ denote ordinary nonnegative
hitting times of $j$ from $i$ in the forward and reverse behavior
chains; in particular, both are zero when $i=j$.

\begin{lemma}[Occupancy concentration from maximum hitting time]
\label{lem:calibration_occupancy}
For every integer $n\ge1$ and every initial distribution,
\begin{equation}\label{eq:calibration_occupancy}
 \mathbb P\left\{\exists x:\ \frac{N_n(x)}n
                  \notin[\mu(x)/2,2\mu(x)]\right\}
 \le4d\exp\left\{-\frac{n}{96(1+\mathfrak h_\star)}\right\}.
\end{equation}
\end{lemma}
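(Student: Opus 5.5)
We will prove the bound one coordinate at a time and then take a union bound over the $d$ coordinates; the factor $4d$ is the $d$ coordinates times two tails times two halves of the argument. Fix $x$. We need a concentration inequality for the additive functional $N_n(x)=\sum_{t<n}\mathbf 1_{\{X_t=x\}}$ whose rate is governed by hitting times rather than mixing. The approach is a regeneration (renewal) argument on successive visits to $x$, with excursion lengths controlled by the forward hitting times $h_x(i)$.

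\textbf{Step 1: forward hitting times from reverse ones.} We first show $\max_i h_x(i)\le C(1+\mathfrak h_\star)$. Apply the reversal identity used in Lemma~\ref{lem:offpolicy-compensation}, namely $h_x(i)=h_i^*(x)+a^*(i)-a^*(x)$, where $a^*(y)=\sum_u\mu(u)h^*_y(u)$. This is the same fundamental-matrix derivation, now carried out on $\mathcal X$. Each term is at most $\mathfrak h_\star$, so $h_x(i)\le 2\mathfrak h_\star$. As a consequence, by Markov's inequality, from any start the chain hits $x$ within $m:=\lceil 4(1+2\mathfrak h_\star)\rceil$ steps with probability at least $1/2$. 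The blocks of length $m$ then give a uniform geometric tail for excursion lengths, and hence an exponential moment $\mathbb E_i e^{\theta\tau_x^+}\le 2$ for $\theta\asymp 1/m$.

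\textbf{Step 2: renewal representation.} Let $\sigma_0$ be the first hitting time of $x$ and $\sigma_1,\sigma_2,\dots$ the i.i.d.\ return times, each with mean $1/\mu(x)$ by Kac. We bound the two tails separately.

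For the lower tail, $\{N_n(x)<k\}\subseteq\{\sigma_0+\sum_{j<k}\sigma_j> n\}$ with $k=\lceil n\mu(x)/2\rceil$. Apply a Chernoff bound using the sub-exponential moment of Step 1. The mean of the sum is about $n/2$, and its fluctuation scale is governed by $m$. This yields $\exp(-cn/m)$, because each return time has mean $1/\mu(x)$ and sub-exponential norm $O(m)$, where $1/\mu(x)\le$ something like $\mathfrak h_\star+1$.

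For the upper tail, $\{N_n(x)>2n\mu(x)\}\subseteq\{\sum_{j\le k'}\sigma_j\le n\}$ with $k'=\lfloor 2n\mu(x)\rfloor$. Bound it by a lower-deviation Chernoff estimate on nonnegative i.i.d.\ variables using $\mathbb E e^{-\theta\sigma}\le 1-\theta/\mu(x)+\theta^2\mathbb E\sigma^2/2$. Control the second moment via $\mathbb E\sigma^2\le 2\max_i h_x(i)\cdot\mathbb E\sigma\lesssim m/\mu(x)$, which follows from the standard identity $\mathbb E\sigma^2=\mathbb E\sigma+2\mathbb E\sum_{t<\sigma}h_x(X_t)$. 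Optimizing $\theta\asymp 1/m$ gives $\exp(-cn/m)$ again.

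Arbitrary initial distributions only affect $\sigma_0$, which has the same uniform tail as the later excursions.

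\textbf{Main obstacle.} The delicate point is obtaining a rate of order $n/\mathfrak h_\star$ uniformly, and not $n\mu(x)/\mathfrak h_\star$. The key observation is that $k\asymp n\mu(x)$ summands, each with variance $\lesssim m/\mu(x)$, have total variance $\lesssim nm$. Against a deviation of order $n$, this gives the exponent $n/m$ with no dependence on $\mu(x)$. Carefully tracking the constants in the Chernoff steps should produce the stated $96(1+\mathfrak h_\star)$. Only the structure of the constants, not their exact values, is the part I am least sure of, and I would absorb any slack by choosing $m$ generously.
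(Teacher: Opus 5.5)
Your architecture is the paper's. The reversal identity gives $h_x(i)\le2\mathfrak h_\star$; its correct form is $h_x(i)=h_i^*(x)+a(x)-a(i)$, so your potential terms have the opposite sign, which does not affect the bound. After that come a renewal decomposition at visits to $x$ and one Chernoff bound per tail. Your upper occupancy tail is essentially the paper's argument: it uses $e^{-z}\le1-z+z^2/2$ and your second-moment identity, which gives $\mathbb E\sigma^2\le(1+4\mathfrak h_\star)\mathbb E\sigma$. The first-visit time $\sigma_0$ is correctly handled by a uniform tail.

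The gap is the lower occupancy tail, i.e.\ the upper deviation of a sum of about $n\mu(x)/2$ return times whose total mean is about $n/2$. There you use only the mean $1/\mu(x)$ and a sub-exponential norm $O(m)$ with $m\asymp\mathfrak h_\star$. These two facts do not give a $\mu(x)$-free exponent. The standard sub-exponential Bernstein bound has variance proxy $m^2$ per summand, not $m/\mu(x)$, so it yields only $\exp\{-cn/(\mu(x)m^2)\}$. For a frequently visited coordinate ($\mu(x)$ of constant order) in a chain with large $\mathfrak h_\star$, this is $\exp\{-cn/\mathfrak h_\star^2\}$. Your closing heuristic (total variance $\lesssim nm$) is the right one. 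But a variance bound alone controls only the \emph{lower} deviation of a sum of nonnegative variables. An upper deviation needs every moment of the return time to scale linearly with its mean.

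The missing ingredient is the paper's moment bound $\mathbb E Z^k\le k!\,F^{k-1}\mathbb E Z$ with $F=1+2\mathfrak h_\star$, for $Z$ either $\sigma_0$ or a return time. For return times this gives $\mathbb E\sigma^k\le k!\,\mu(x)^{-1}F^{k-1}$. It follows from what you already have, by iterating your second-moment reasoning to all orders. On $\{Z>s\}$ with $s\ge1$, the residual $Z-s$ is a first hitting time of $x$ from $X_s$, with mean at most $2\mathfrak h_\star$. Inducting on $Z^k\le k\sum_{s<Z}(Z-s)^{k-1}$, with $\sum_{s\ge1}\mathbb P(\sigma>s)=\mu(x)^{-1}-1$ and $\mu(x)^{-1}\le F$, closes the argument. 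Summing the series gives $\log\mathbb E e^{\lambda\sigma}\le\lambda\mu(x)^{-1}/(1-\lambda F)$ for $0\le\lambda<F^{-1}$.

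Now take $\lambda=(4F)^{-1}$ and let $W_k$ be a sum of $k$ return times with $k\mu(x)^{-1}<n/2$. Then $\mathbb P\{W_k\ge3n/4\}\le\exp\{-3n/(16F)+n/(6F)\}=\exp\{-n/(48F)\}$, and $48F\le96(1+\mathfrak h_\star)$. The statement fixes both the prefactor $4d$ and the constant $96$, so slack cannot be absorbed by enlarging $m$. The mean-proportional moment bound, not the block estimate $\mathbb E e^{\theta\sigma}\le2$, is what delivers these constants.
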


\begin{proof}
The hitting-time reversal identity gives
\[
 h_i(j)=h_j^*(i)+a(i)-a(j),\qquad
 a(i)=\sum_z\mu(z)h_i^*(z),\qquad
 0\le a(i)\le\mathfrak h_\star.
\]
For clarity, this identity also holds for periodic chains: with
$Z=(I-K_b+\mathbf1\mu^{\mathsf T})^{-1}$, the usual first-step
equations give $h_j(i)=(Z_{jj}-Z_{ij})/\mu(j)$; the reverse matrix is
$Z^*_{ij}=\mu(j)Z_{ji}/\mu(i)$, and substitution gives the displayed
identity. Consequently, $h_i(j)\le2\mathfrak h_\star$.

Put $F=1+2\mathfrak h_\star$ and fix a coordinate $x$. Let
$\tau=\inf\{t\ge0:X_t=x\}$, and let $\xi_1,\xi_2,\ldots$ be the
successive positive return times after this first visit. The strong
Markov property makes the $\xi_i$ independent and identically
distributed, with mean $m=1/\mu(x)$ by Kac's formula. Moreover,
\[
 \mathbb E\tau\le F,\qquad
 m=1+\sum_yK_b(x,y)h_x(y)\le F.
\]
For every integer $k\ge1$, the hitting and return times satisfy
\begin{equation}\label{eq:calibration_moments}
 \mathbb E\tau^k\le k!F^k,\qquad
 \mathbb E\xi_1^k\le k!mF^{k-1}.
\end{equation}
Indeed, for any of these times $Z$, on $\{Z>s\}$ its residual
waiting time at each integer $s\ge1$ is a first hitting time with
mean at most $F$; at $s=0$ it is the original waiting time. Apply
induction to
$Z^k\le k\sum_{s=0}^{Z-1}(Z-s)^{k-1}$ to get
$\mathbb EZ^k\le k!F^{k-1}\mathbb EZ$.
All the moments are finite because finite irreducibility gives a
uniform positive probability of a visit within a fixed finite block.

For $0\le\lambda<F^{-1}$, the moment bounds and
$e^{-z}\le1-z+z^2/2$ for $z\ge0$ imply
\begin{align*}
 \mathbb Ee^{\lambda\tau}&\le(1-\lambda F)^{-1},\\
 \log\mathbb Ee^{\lambda\xi_1}&\le\frac{m\lambda}{1-\lambda F},\\
 \log\mathbb Ee^{-\lambda\xi_1}&\le-m\lambda+mF\lambda^2.
\end{align*}
Set $\lambda=(4F)^{-1}$ and $W_k=\sum_{i=1}^k\xi_i$, with
$W_0=0$. For the lower occupancy tail, let
$k=\lceil n/(2m)\rceil-1$. The event $N_n(x)<n/(2m)$ implies
$\tau+W_k\ge n$, and $km<n/2$. Hence
\begin{align*}
 \mathbb P\{N_n(x)<n/(2m)\}
 &\le\mathbb P\{\tau\ge n/4\}
            +\mathbb P\{W_k\ge3n/4\}\\
 &\le\frac43e^{-n/(16F)}
       +\exp\left\{-\frac{3n}{16F}+\frac{km}{3F}\right\}
 \le\frac73e^{-n/(48F)}.
\end{align*}
This calculation also applies when $k=0$.
If $n\ge2m$, put $k=\lfloor2n/m\rfloor$. The event
$N_n(x)>2n/m$ implies $W_k<n$, while $km\ge3n/2$. Thus
\[
 \mathbb P\{N_n(x)>2n/m\}
 \le\exp\left\{\frac{n}{4F}-\frac{3km}{16F}\right\}
 \le e^{-n/(32F)}.
\]
Combining these bounds gives $4e^{-n/(48F)}$. When $n<2m\le2F$,
this last bound exceeds one and is still valid. A union bound and
$48F\le96(1+\mathfrak h_\star)$ prove the lemma.
\end{proof}

\section{Proof of the Constant-Step Theorem}
\label{app:main-proof}
\begin{proof}[Proof of Theorem~\ref{thm:constant-main}]
Let $\Phi_{T,0}$ be the homogeneous propagator of evaluation under $\pi^*$ along the behavior trajectory, defined in \eqref{eq:envelope-propagator}. Lemma~\ref{lem:two-sided-envelopes}, with failure probability $\delta/2$, applies because $L_T\ge\log(8d(T+1)/\delta)$ and its step condition is monotone in the logarithm. On that event,
\[
 \|Q_T-Q^*\|_\infty\le\varepsilon/2+H\|\Phi_{T,0}\|_\infty.
\]
Proposition~\ref{prop:fixed-policy-forgetting}, applied with $\rho=\varepsilon/(2H)$, makes the second term at most $\varepsilon/2$ with failure probability at most $\delta/2$, provided
\[
 T\ge
 \left\lceil8\left(\frac{H}{\eta\mu_{\min}}+\mathfrak h_\star\right)\right\rceil
 \left\lceil\log_2\frac{4dH}{\delta\varepsilon\mu_{\min}}\right\rceil.
\]
The step satisfies $\eta H\le1$, and \eqref{eq:constant-samples} implies this last condition for a sufficiently large $C$. A union bound completes the proof.
\end{proof}

\section{Q-Learning with an Empirically Calibrated Step Size}
\label{app:unknown-mu}
\begin{proof}[Proof of Theorem~\ref{thm:calibrated-upper}]
All constants below are universal. Increase $C_0$ so that every
logarithm appearing in the two-envelope bound with confidence
$\delta/3$ and length $n$ is at most $L_n$.
Write $E$ for the event in which all empirical frequencies belong
to the intervals in Lemma~\ref{lem:calibration_occupancy}.
The sample condition, for sufficiently large $C_1$, implies
\[
 n\ge96(1+\mathfrak h_\star)\log(12d/\delta),\qquad
 n\mu_{\min}\ge1.
\]
Therefore $\mathbb P(E^c)\le\delta/3$, and on $E$,
\[
 \tfrac12n\mu_{\min}\le M_n\le2n\mu_{\min}.
\]
The cap in \eqref{eq:calibration_step} is inactive when $C_1$ is
sufficiently large. In particular,
\begin{equation}\label{eq:calibration_bracket}
 \frac{AH\ell_n}{2\mu_{\min}n}
 \le\widehat\eta_n\le
 \frac{2AH\ell_n}{\mu_{\min}n}.
\end{equation}

Condition now on the pilot history through $S_n$, before drawing $A_n$. The
step $\widehat\eta_n$ and physical state $S_n$ are fixed under this
conditioning. Draw $A_n$ freshly from the behavior policy; subsequent
transitions have the original Markov reward law. All bounds used below hold uniformly over this
starting state and therefore apply conditionally. The event $E$ is
measurable with respect to the conditioning history.

Let $\Phi$ be the fixed-optimal-policy propagation matrix on the
estimation segment, and put
\[
 p=\left\lceil\frac{L_n}{\ell_n}\right\rceil,
 \qquad r=(n+8)^{-1},\qquad q=\delta/3.
\]
Because $L_n\ge\ell_n$, we have
$L_n\le p\ell_n\le2L_n$. By \eqref{eq:calibration_bracket},
\[
 pH\widehat\eta_n
 \le\frac{4AH^2L_n}{\mu_{\min}n}.
\]
Thus $pH\widehat\eta_n\le1/2$ when $C_1$ is sufficiently large.
Corollary~\ref{cor:power-forgetting} gives a block length satisfying
\[
 b_p=\left\lceil8\left\{\frac{2H}{p\widehat\eta_n\mu_{\min}}
                                  +\mathfrak h_\star\right\}\right\rceil
 \le\frac{32n}{AL_n}+8\mathfrak h_\star+1
\]
and the conditional failure bound
\[
 \mathbb P\{\|\Phi\|_{\infty\to\infty}>r\mid\mathcal F_n\}
 \le\frac d{\mu_{\min}}r^{-p}
                          2^{-\lfloor n/b_p\rfloor}.
\]
Define
\[
 k=\left\lceil\log_2\frac d{\mu_{\min}q r^p}\right\rceil.
\]
The inequalities $\log(d/(\mu_{\min}q))\le L_n$ and
$p\log(n+8)\le L_n$ show that $k\le4L_n$. Consequently,
\[
 b_pk\le\frac{128}{A}n
              +32\mathfrak h_\star L_n+4L_n\le n,
\]
by the specified value of $A$ and a sufficiently large $C_1$.
We obtain
\begin{equation}\label{eq:calibration_forgetting}
 \mathbb P\left(E\cap\{\|\Phi\|_{\infty\to\infty}
                               >(n+8)^{-1}\}\right)\le\delta/3.
\end{equation}
The integer $p$ is only a proof parameter and is not used by the
algorithm.

The upper bracket in \eqref{eq:calibration_bracket} and
\eqref{eq:calibrated_sample} also imply
\[
 \widehat\eta_n\le\frac{c\varepsilon^2}{H^3L_n},
 \qquad \widehat\eta_nH\le1,
\]
where $c$ can be the numerical constant in the two-envelope lemma.
Apply Lemma~\ref{lem:two-sided-envelopes} conditionally to the zero-initialized
estimation run, with confidence $\delta/3$. On $E$, outside an
additional event of unconditional probability at most $\delta/3$,
\[
 \|\widehat Q_{2n}-Q^*\|_\infty
 \le\varepsilon/2+H\|\Phi\|_{\infty\to\infty}.
\]
Finally, \eqref{eq:calibrated_sample} ensures $H/(n+8)\le\varepsilon/2$.
Combining the occupancy, envelope, and forgetting events proves
the asserted probability bound.
\end{proof}

\begin{proof}[Proof of Corollary~\ref{cor:calibrated_mse}]
Apply Theorem~\ref{thm:calibrated-upper} with accuracy $\varepsilon/2$
and confidence $\delta=\varepsilon^2/(4H^2)$. Both the Q-learning
iterates and $Q^*$ belong to $[0,H]^d$, so the squared error is
at most $H^2$ on the failure event. Its expectation is therefore at
most $\varepsilon^2/4+\delta H^2\le\varepsilon^2$.
Neither this accuracy nor this confidence is an algorithm input.
\end{proof}

\end{document}